\DeclareMathOperator*{\argmin}{\arg\!\min}
\DeclareMathOperator*{\argmax}{\arg\!\max}
\definecolor{todogreen}{rgb}{0,0.6,0}
\newtheorem{theorem}{Theorem}
\newtheorem{proposition}{Proposition}
\newtheorem{corollary}{Corollary}
\newcommand{\wname}{Optimistic }
\newcommand{\optimOper}{\ensuremath{\mathcal{T}^*}}
\newcommand{\qmixOper}{\ensuremath{\mathcal{T}^*_{\text{Qmix}}}}
\newcommand{\wOper}{\ensuremath{\mathcal{T}^*_{w}}}
\newcommand{\wqmixOper}{\ensuremath{\mathcal{T}^*_{\text{WQMIX}}}}
\newcommand{\qmixSpace}{\ensuremath{\mathcal{Q}^{mix}}}
\newcommand{\qmixProj}{\ensuremath{\Pi_{\text{Qmix}}}}
\newcommand{\wProj}{\ensuremath{\Pi_{w}}}
\newcommand{\qCentral}{\ensuremath{\hat{Q}^*}}
\newcommand{\qtot}{\ensuremath{Q_{tot}}}
\newcommand{\approxU}{\ensuremath{\mathbf{\hat{u}}}}
\newcommand{\U}{\ensuremath{\mathbf{u}}}
\title{Weighted QMIX: Expanding Monotonic Value Function Factorisation for
            Deep Multi-Agent Reinforcement Learning}
\author{
  Tabish Rashid, ~Gregory Farquhar\thanks{Now at Google DeepMind.}~, ~Bei Peng, ~Shimon Whiteson\\
  Department of Computer Science\\ 
  University of Oxford\\
  \texttt{\{tabish.rashid, gregory.farquhar, bei.peng, shimon.whiteson\}@cs.ox.ac.uk}\\
}
\begin{document}

\maketitle

\begin{abstract}
    QMIX is a popular $Q$-learning algorithm for cooperative MARL in the centralised training and decentralised execution paradigm.
In order to enable easy decentralisation, QMIX restricts the joint action $Q$-values it can represent to be a monotonic \textit{mixing} of each agent's utilities.
However, this restriction prevents it from representing value functions in which an agent's ordering over its actions can depend on other agents' actions.
To analyse this representational limitation, we first formalise the objective QMIX optimises, which allows us to view QMIX as an operator 
that first computes the $Q$-learning targets and then projects them into the space representable by QMIX.
This projection returns a representable $Q$-value that minimises the unweighted squared error across all joint actions.
We show in particular that this projection can fail to recover the optimal policy even with access to $Q^*$,
which primarily stems from the equal weighting placed on each joint action.
We rectify this by introducing a weighting into the projection, in order to place more importance on the better joint actions.
We propose two weighting schemes and prove that they recover the correct maximal action for any joint action $Q$-values, and therefore for $Q^*$ as well.
Based on our analysis and results in the tabular setting, we introduce two scalable versions of our algorithm,
Centrally-Weighted (CW) QMIX and Optimistically-Weighted (OW) QMIX and demonstrate improved performance on both predator-prey and challenging multi-agent StarCraft benchmark tasks \citep{samvelyan19smac}.

\end{abstract}

\section{Introduction}

Many critical tasks involve multiple agents acting in the same environment.
To learn good behaviours in such problems from agents' experiences, 
we may turn to multi-agent reinforcement learning (MARL).
Fully decentralised policies are often used in MARL, due to practical 
communication constraints or as a way to deal with an intractably large joint 
action space.
However, when training in simulation or under controlled conditions we may 
have access to additional information, and agents can freely share their
observations and internal states.
Exploiting these possibilities can greatly improve the efficiency of learning
\citep{foerster2016learning,foerster_counterfactual_2017}.

In this paradigm of centralised training for decentralised execution,
QMIX \cite{rashid2018qmix} is a popular $Q$-learning algorithm with 
state-of-the-art 
performance on the StarCraft Multi-Agent Challenge \citep{samvelyan19smac}. 
QMIX represents the optimal joint action value function using a monotonic 
\emph{mixing} 
function of per-agent utilities.
This restricted function class \qmixSpace~allows for efficient 
maximisation during training, and easy decentralisation of the learned policy.
However, QMIX is unable to represent joint action value functions that are 
characterised as \textit{nonmonotonic} \citep{mahajan2019maven}, i.e.,
an agent's ordering over its own 
actions depends on other agents' actions.
Consequently, QMIX cannot solve tasks that require 
significant coordination within a given timestep \citep{son2019qtran,bhmer2019deep}.
In this work, we analyse an idealised, tabular version of QMIX to study this
representational limitation, and then develop algorithms 
to resolve these limitations in theory and in practice.

We formalise the objective that QMIX optimises, which allows us to view QMIX as 
an operator that first computes the $Q$-learning targets and then projects them 
into \qmixSpace~ by 
minimising the unweighted squared error across all joint actions.
We show that, since in general $Q^* \notin \qmixSpace$, the 
projection of $Q^*$, which we refer to as \qtot, can have incorrect estimates for the optimal joint action, yielding suboptimal policies.
These are fundamental limitations of the QMIX algorithm independent from 
exploration and compute constraints, and occur even with access to the true 
$Q^*$.

These limitations primarily arise because QMIX's projection of $Q^*$ yields a 
$Q_{tot}$ that places equal importance on approximating the $Q$-values for 
\emph{all} joint actions. 
Our key insight is that if we ultimately care only about 
the greedy optimal policy, it is more important to accurately represent 
the value of the optimal joint action than the suboptimal ones.  
Therefore, we can improve the policy recovered from $Q_{tot}$ by appropriately weighting each joint action when projecting $Q^*$ into \qmixSpace.

Based on this intuition, we introduce a weighting function into our projection.
In the idealised tabular setting we propose two weighting functions and prove that the projected \qtot~recovers the correct maximal action for any $Q$, and therefore for $Q^*$ as well. 
Since this projection always recovers the correct maximal joint action, 
we benefit from access to $Q^*$ (or a learned approximation of it).
To this end, we introduce a learned approximation of $Q^*$, from an unrestricted function class, which we call \qCentral.
By using \qtot, now a weighted projection of \qCentral, to perform maximisation, we show that \qCentral~ converges to $Q^*$ and that \qtot~thus recovers the optimal policy.

Based on our analysis and results in the tabular setting, we present two scalable versions of our algorithm, Centrally-Weighted (CW) QMIX 
and Optimistically-Weighted (OW) QMIX.
We demonstrate their improved ability to cope 
with environments with nonmonotonic value functions,
by showing superior performance in a predator-prey task in which
the trade-offs made by QMIX prevent it from solving the task. 
Additionally, we demonstrate improved robustness over QMIX to the amount of 
exploration performed, by showing better empirical performance on a range of 
SMAC maps.
Our ablations and additional analysis experiments demonstrate the importance of both a weighting and an unrestricted \qCentral~ in our algorithm.

\section{Background}
\label{sec:background}

A fully cooperative multi-agent sequential decision-making task can be described as a \emph{decentralised partially observable Markov decision process} (Dec-POMDP) \citep{oliehoek2016concise} consisting of a tuple $G=\left\langle S,U,P,r,Z,O,n,\gamma\right\rangle$. 
$s \in S$ describes the true state of the environment.
At each time step, each agent $a \in A \equiv \{1,...,n\}$ chooses an action $u_a\in U$, forming a joint action $\mathbf{u}\in\mathbf{U}\equiv U^n$. 
This causes a transition on the environment according to the state transition function $P(s'|s,\mathbf{u}):S\times\mathbf{U}\times S\rightarrow [0,1]$. 
All agents share the same reward function $r(s,\mathbf{u}):S\times\mathbf{U}\rightarrow\mathbb{R}$ and $\gamma\in[0,1)$ is a discount factor. 

Due to the \textit{partial observability}, each agent's individual observations $z\in Z$ are produced by the observation function $O(s,a):S\times A\rightarrow Z$. 
Each agent has an action-observation history $\tau_a\in T\equiv(Z\times U)^*$, on which it conditions a (potentially stochastic) policy $\pi^a(u_a|\tau_a):T\times U\rightarrow [0,1]$. 
$\boldsymbol{\tau}$ denotes the action-observation histories of all agents (up to that timestep).
The joint policy $\pi$ has a joint \textit{action-value function}: $Q^\pi(s_t, \mathbf{u}_t)=\mathbb{E}_{s_{t+1:\infty}, \mathbf{u}_{t+1:\infty}} \left[R_t|s_t,\mathbf{u}_t\right]$, where $R_t=\sum^{\infty}_{i=0}\gamma^ir_{t+i}$ is the \textit{discounted return}.
For our idealised tabular setting, we consider a \textit{fully observable} setting in which each agent's observations are the full state. This is equivalent to a multi-agent MDP (MMDP) \citep{oliehoek2016concise} which is itself equivalent to a standard MDP with $U^n$ as the action space.

We adopt the \emph{centralised training and decentralised execution} paradigm \citep{oliehoek2008optimal,kraemer2016multi}.
During training our learning algorithm has access to the true state $s$ and 
every agent's action-observation history, as well as the freedom to share all 
information between agents.
However, during testing (execution), each agent has access only to its own action-observation history.

\subsection{QMIX}
\label{sec:background_qmix}
The Bellman optimality operator is defined by:
\begin{equation}
\optimOper Q(s,\mathbf{u}) := \mathbb{E} [r + \gamma \max_{\mathbf{u'}} Q(s',\mathbf{u'})]
,
\label{eq:bellman_opt}
\end{equation}
where the expectation is over the next state $s' \sim P(\cdot|s, \mathbf{u})$ 
and reward $r \sim r(\cdot|s,\mathbf{u})$.
Algorithms based on $Q$-learning \citep{watkins1992q} use samples from the 
environment to estimate the expectation in \eqref{eq:bellman_opt}, in order to 
update their estimates of $Q^*$.

VDN \citep{sunehag2018value} and QMIX are $Q$-learning algorithms for the 
cooperative MARL setting, which estimate the optimal joint action value 
function $Q^*$ as \qtot, with specific forms.
VDN factorises \qtot~into a sum of the per-agent utilities: $\qtot(s,\U) = 
\sum_{a=1}^n Q_a(s,u_a)$, whereas QMIX combines the
per-agent utilities via a continuous monotonic function that is state-dependent: $f_s(Q_1(s,u_1),...,Q_n(s,u_n)) = Q_{tot}(s,\mathbf{u})$, where $\frac{\partial f_s}{\partial Q_a}  \geq 0,~ \forall a \in A \equiv \{1,...,n\}$.

In the deep RL setting neural networks with parameters $\theta$ are used as 
function approximators, and
QMIX is trained much like a DQN \citep{mnih2015human}.
Considering only the fully-observable setting for ease of presentation, a 
replay buffer stores transition tuples $(s,\U, r, s', d)$,
in which the agents take joint action $\U$ in state $s$, receive reward $r$ and 
transition to $s'$ and $d$ is a boolean indicating if $s'$ is a terminal state.
QMIX is trained to minimise the squared TD error on a minibatch of 
$b$ samples from the replay buffer:
$
\sum_{i=1}^b (\qtot(s, \U;\theta) - y_i)^2,
$
where $y_i = r + \gamma \max_{\U'} \qtot(s', \U';\theta^-)$ are the targets, and $\theta^-$ are the parameters of a \textit{target network} that are periodically copied from $\theta$. 
The monotonic \textit{mixing} function $f_s$ is parametrised as a feedforward network, whose non-negative weights are generated by hypernetworks \citep{ha2017hyper} that take the state as input.
\section{QMIX Operator}
\label{sec:tabular_qmix}

In this section we examine an operator that represents an \textit{idealised} version of QMIX in a tabular setting.
The purpose of our analysis is primarily to understand the fundamental 
limitations of QMIX that stem from its training objective and the restricted 
function class it uses. We write this function class as \qmixSpace~:
\begin{align*}
\qmixSpace := \{Q_{tot} | Q_{tot}(s,\mathbf{u}) = 
f_s(Q_1(s,u_1),...Q_n(s,u_n)), 
\frac{\partial f_s}{\partial Q_a}  \geq 0, Q_a(s,u) \in \mathbb{R}\}.
\end{align*}
This is the space of all $Q_{tot}$ that can be represented by monotonic 
funtions of tabular $Q_a(s,u)$. At each iteration of our idealised algorithm, we constrain $Q_{tot}$ to 
lie in \qmixSpace~by solving the following optimisation problem:
\begin{equation}
\argmin_{q \in \qmixSpace} \sum_{\U \in \mathbf{U}} (\optimOper \qtot(s,\U) - q 
(s,\U))^2.
\label{eq:qmix_objective}
\end{equation}
To avoid any confounding factors regarding exploration, we assume this 
optimisation is performed for all states and joint actions at each iteration, 
as in planning algorithms like value iteration \citep{puterman2014markov}.
We also assume the optimisation is performed exactly.
However, since it is not guaranteed to have a unique solution, a random $q$ is 
returned from the set of objective-minimising candidates.

By contrast, the original QMIX algorithm in the deep RL setting interleaves 
exploration and approximate optimisation of this objective, using samples from 
the environment to approximate \optimOper, and a finite number of gradient 
descent steps to estimate the argmin.
Additionally, sampling uniformly from a replay buffer of the most recent 
experiences does not strictly lead to a uniform weighting across joint actions. 
Instead the weighting is proportional to the frequency at which a joint action 
was taken.
Incorporating this into our analysis would introduce significant complexity and detract from our main focus: to analyse the limitations of restricting the representable function class to \qmixSpace.

The optimisation in \eqref{eq:qmix_objective} can be separated into 
two distinct parts: the first computes targets using 
\optimOper, and the second projects those targets into \qmixSpace.
We define the corresponding projection operator \qmixProj~as follows:
\begin{equation*}
\qmixProj Q := \argmin_{q \in \qmixSpace} \sum_{\mathbf{u} \in \mathbf{U}} 
(Q(s,\mathbf{u}) - q(s,\mathbf{u}))^2
\end{equation*}
We can then define $\qmixOper := \qmixProj \optimOper$ as the QMIX operator, 
which exactly corresponds to the objective in \eqref{eq:qmix_objective}.

There is a significant literature studying projections of value functions into 
linear function spaces for RL (see \citep{sutton2018reinforcement} for a detailed introduction and overview).
However, despite some superficial similarities, our focus is considerably 
different in several ways.
First, we consider a nonlinear projection, rendering many methods of analysis 
inapplicable.
Second, as the $Q_a(s,u)$ are tabular, there is no tradeoff in the quality of 
representation across different states, and consequently no need to weight the 
different states in the projection.
By contrast, unlike linear $Q$-functions, our restricted space does induce 
tradeoffs in the quality of represenation across different joint actions, and 
weighting them differently in the projection is central to our method.
Third, the targets in our optimisation are fixed to $\optimOper \qtot(s,\U)$ 
at each iteration, rather than depending on $q$ as they would in a minimisation 
of Mean Squared Bellman Error (MSBE) or Mean Squared Projected Bellman Error (MSPBE) in linear RL.
This makes our setting closer to that of fitted $Q$-iteration 
\cite{ernst2005tree} in which a regression problem is solved at each iteration 
to fit a function approximator from a restricted class to $Q$-learning-style 
targets.
Our focus of study is the unique properties of the particular function 
space \qmixSpace, and the tradeoffs in representation quality induced by 
projection into it.

\subsection{Properties of $\qmixOper$}
\label{sec:qmix_properties}

To highlight the pitfalls of the projection \qmixProj~into \qmixSpace, we 
consider the effect of applying \qmixProj~to the true $Q^*$, which is readily 
available in deterministic normal form games where $Q^*$ is just the immediate 
reward.

\setlength\intextsep{0pt}
\begin{wraptable}{r}{0pt}
	\begin{tabular}{ | c | c | }
		\hline
		1 & 0 \\
		\hline
		0 & 1  \\
		\hline
	\end{tabular}
	~~
	\begin{tabular}{ | c | c | }
		\hline
		1 & 1/3 \\
		\hline
		1/3 & 1/3  \\
		\hline
	\end{tabular}
	~~
	\begin{tabular}{ | c | c | }
		\hline
		1/3 & 1/3 \\
		\hline
		1/3 & 1  \\
		\hline
	\end{tabular}
	\hfill
	\caption{Non-monotonic payoff matrix (Left) and the two possible \qtot's 
	returned by \qmixProj~(Middle and Right).}
	\label{matrix:non_monotonic}
\end{wraptable}

\textbf{\qmixOper~is not a contraction.}
The payoff matrix in Table \ref{matrix:non_monotonic} (Left) is a simple 
example of a value function that cannot be perfectly represented in \qmixSpace. 
Table \ref{matrix:non_monotonic} (Middle) and (Right) show two distinct 
$Q_{tot}$, both of which are global minima of the optimisation solved by 
$\qmixProj Q^*$.
Hence, \qmixOper~is not a contraction, which would have a unique fixed 
point.

\textbf{QMIX's argmax is not always correct.}
There exist $Q$-functions such that $\argmax \qmixProj Q \neq \argmax~Q$.
For example, the payoff matrix in Table \ref{matrix:argmax_wrong} (Left) (from 
\citet{son2019qtran}) produces a value function for which QMIX's approximation 
(Right) does not result in the correct argmax.
 
\setlength\intextsep{0pt}
\begin{wraptable}{r}{0pt}
	\begin{tabular}{ | c | c | c |}
		\hline
		8 & -12 & -12 \\
		\hline
		-12 & 0 & 0 \\
		\hline
		-12 & 0 & 0 \\
		\hline
	\end{tabular}
	~~
	\begin{tabular}{ | c | c | c |}
		\hline
		-12 & -12 & -12 \\
		\hline
		-12 & 0 & 0 \\
		\hline
		-12 & 0 & 0 \\
		\hline
	\end{tabular}
	\hfill
	\caption{Payoff matrix (Left) in which \qtot~returned from \qmixProj~ has 
	an incorrect argmax (Right).}
	\label{matrix:argmax_wrong}
\end{wraptable}

\textbf{QMIX can underestimate the value of the optimal joint action.}
Furthermore, if it has an incorrect argmax, the value of the true optimal 
joint action can be underestimated, e.g., $-12$ instead of $8$ in Table 
\ref{matrix:argmax_wrong}.
If QMIX gets the correct argmax then it represents the maximum $Q$-value 
perfectly (proved formally in Appendix \ref{app:proofs}).
However, if QMIX's argmax joint action is not the true optimal joint action 
then QMIX can underestimate the value of that action.

These failure modes are problematic because they show fundamental limitations
of QMIX, that are independent from: 1) compute constraints, since we exactly 
minimise the objectives posed; 2) exploration, since we update every 
state-action pair; and 
3) parametrisation of the mixing function and agent utilities, since we are assume 
that \qtot~can be any member of \qmixSpace, whereas in practice we can only 
represent a subset of \qmixSpace.
\section{Weighted QMIX Operator}
\label{sec:weighted_qmix}

In this section we introduce an operator for an \textit{idealised} version of our algorithm, \emph{Weighted QMIX} (WQMIX), 
in order to compare it to the operator we introduced for QMIX.

The negative results in Section \ref{sec:qmix_properties} concern 
the scenario in which we optimise QMIX's loss function across \emph{all} 
joint actions for every state.
We argue that this equal weighting over joint actions when performing the 
optimisation in \eqref{eq:qmix_objective} is responsible for the possibly 
incorrect argmax of the objective-minimising solution.
Consider the example in Table \ref{matrix:argmax_wrong}.
A monotonic $Q_{tot} \in \qmixSpace$ cannot increase its estimate of the value 
of the single 
optimal joint action above -12 without either increasing the estimates of the 
value of the bad joint actions above their true value of -12, or decreasing the 
estimates of the zero-value joint actions below -12.
The error for misestimating several of the suboptimal joint actions would 
outweigh the improvement from better estimating the single optimal joint 
action.
As a result the optimal action value is underestimated and the resulting policy 
is suboptimal.

By contrast, consider the extreme case in which we only optimise the 
loss for the \emph{single} optimal joint action $\mathbf{u^*}$.
For a single action, the representational limitation of QMIX has no effect so 
we can optimise the objective perfectly, recovering the value of the optimal 
joint action.

However, we still need to learn that the other action values are lower than 
$Q_{tot}(\mathbf{u^*})$ in order to recover the optimal policy.
To prioritise estimating $Q_{tot}(\mathbf{u^*})$ well, while still anchoring 
down the value estimates for other joint actions, we can add a suitable 
weighting function $w$ into the projection operator of QMIX:
\begin{equation}
\wProj Q := \argmin_{q \in \qmixSpace} \sum_{\mathbf{u} \in \mathbf{U}} w(s,\mathbf{u})(Q(s,\mathbf{u}) - q(s,\mathbf{u}))^2.
\label{eq:weighted_q_proj}
\end{equation}
The weighting function $w : S \times \mathbf{U} \rightarrow (0,1]$ weights the importance of each joint action in QMIX's loss function. 
It can depend on more than just the state and 
joint action, but we omit this from the notation for simplicity.
Setting $w(s, \U) = 1$ recovers the projection operator \qmixProj.

\subsection{Weightings}

The choice of weighting is crucial to ensure that WQMIX can overcome the 
limitations of QMIX. 
As shown in Section \ref{sec:tabular_qmix}, even if we have access to $Q^*$, if 
we use a uniform weighting then we can still end up with the wrong argmax after 
projection into the monotonic function space.
We now consider two different weightings and
show in Theorems \ref{th:critic_w} and \ref{th:hyst_w} that these choices of $w$ ensure that the $Q_{tot}$ returned from the projection has the correct argmax. 
The proofs of these theorems can be found in Appendix \ref{app:proofs}.
For both weightings, let $\alpha \in (0,1]$ and consider the weighted projection of an arbitrary joint action value function $Q$.

\paragraph{Idealised Central Weighting} The first weighting, which we call Idealised Central Weighting, is quite simple:
\begin{equation}
    w(s,\mathbf{u}) = 
    \begin{cases}
    1 & \mathbf{u} = \mathbf{u}^* = \argmax_{\U} Q(s,\U)\\
    \alpha & \text{otherwise}.
    \end{cases}
    \label{eq:critic_w}
\end{equation}
To ensure that the weighted projection returns a $Q_{tot}$ with the correct argmax, we simply down-weight every suboptimal action.
However, this weighting requires computing the maximum across the joint action space, which is often infeasible.
In Section \ref{sec:deeprl} we discuss an approximation to this weighting in the deep RL setting.

\begin{theorem}
Let $w$ be the Idealised Central Weighting from \eqref{eq:critic_w}.
Then $\exists \alpha>0$ such that $\argmax \Pi_w Q = \argmax Q$ for any $Q$. 
\label{th:critic_w}
\end{theorem}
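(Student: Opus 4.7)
The plan is to argue by contradiction: for $\alpha$ sufficiently small, any $q \in \qmixSpace$ whose argmax is not $\mathbf{u}^* := \argmax_{\mathbf{U}} Q(s, \cdot)$ has strictly greater weighted loss than an explicit witness $q^\dagger \in \qmixSpace$ that does have $\argmax q^\dagger = \{\mathbf{u}^*\}$.

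I would begin by constructing $q^\dagger$. Set per-agent tabular utilities $Q_a(s, u_a^*) = 1$ and $Q_a(s, u) = 0$ for $u \neq u_a^*$, and pick a monotonic mixer $f_s$ returning $v := Q(s, \mathbf{u}^*)$ at the all-ones input and $v - \delta$ at every other input in $\{0, 1\}^n$, for some $\delta > 0$; this is monotonic because flipping any $0$-coordinate to a $1$ weakly increases $f_s$. Then $q^\dagger(s, \mathbf{u}^*) = v$ and $q^\dagger(s, \mathbf{u}) = v - \delta$ otherwise, so $\argmax q^\dagger = \{\mathbf{u}^*\}$, and the weighted loss is $L(q^\dagger, \alpha) = \alpha K_1$ where $K_1 := \sum_{\mathbf{u} \neq \mathbf{u}^*}(Q(s, \mathbf{u}) - v + \delta)^2$ is a constant independent of $\alpha$.

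Optimality of $q^*_\alpha := \Pi_w Q$ gives $L(q^*_\alpha, \alpha) \leq \alpha K_1$; the unweighted $\mathbf{u}^*$ term alone then forces $q^*_\alpha(s, \mathbf{u}^*) \geq v - \sqrt{\alpha K_1}$. Now suppose for contradiction that some $\mathbf{u}' \neq \mathbf{u}^*$ lies in $\argmax q^*_\alpha$. Since $q^*_\alpha \in \qmixSpace$, the per-agent ordering underlying it makes $u_a'$ a top action for every disagreeing agent $a$, so by monotonicity $q^*_\alpha(s, \mathbf{u}') \geq q^*_\alpha(s, \mathbf{u}^*) \geq v - \sqrt{\alpha K_1}$. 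Letting $\Delta := v - \max_{\mathbf{u} \neq \mathbf{u}^*} Q(s, \mathbf{u}) > 0$, the $\mathbf{u}'$ term alone contributes at least $\alpha(\Delta - \sqrt{\alpha K_1})^2$ to the weighted loss, and choosing $\alpha$ small enough makes this contribution exceed $\alpha K_1$, contradicting optimality.

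The main obstacle is guaranteeing that $K_1$ can be made smaller than $\Delta^2$. The crude uniform witness above may fail when there are many highly suboptimal actions inflating $K_1$. Resolving this requires either a refined witness that approximates $Q$ more closely on sub-optimal actions while still placing its strict argmax at $\mathbf{u}^*$, or a sharper lower bound on the loss of $q^*_\alpha$ that exploits how monotonicity in $\qmixSpace$ forces not just $\mathbf{u}'$ but an entire upward chain of sub-optimal actions in the ordering induced by $q^*_\alpha$ to take $q$-values at least $v - \sqrt{\alpha K_1}$, so their pooled contributions together outweigh $\alpha K_1$. Pinning down the right trade-off between $\delta$, the argmax-violating ordering, and the weight $\alpha$ is the technical heart of the proof.
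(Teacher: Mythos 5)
Your setup is sound and your witness $q^\dagger$ correctly shows that the minimum weighted loss is at most $\alpha K_1 = O(\alpha)$, but the contradiction does not close, and the obstacle you flag at the end is a genuine gap rather than a technicality. The term you use to lower-bound the loss of a wrong-argmax minimiser is the one at $\mathbf{u}'$, which carries weight $\alpha$; it therefore contributes only $\alpha(\Delta-\sqrt{\alpha K_1})^2 = O(\alpha)$, the same order as your upper bound $\alpha K_1$. Shrinking $\alpha$ cannot separate two quantities of the same order, so your argument succeeds only under the unguaranteed side condition $K_1<\Delta^2$, and neither of your proposed repairs fixes this: a refined witness cannot drive $K_1$ below an arbitrary threshold (matching $Q$ on the suboptimal actions while keeping the strict argmax at $\mathbf{u}^*$ is exactly what $\qmixSpace$ cannot do in general), and pooling weight-$\alpha$ terms along an upward chain still yields an $O(\alpha)$ bound.

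The missing idea --- and the route the paper takes --- is to extract an $\alpha$-\emph{independent} lower bound from the weight-one term at $\mathbf{u}^*$. The paper first proves (its Propositions \ref{prop:wargmax} and \ref{prop:exists_argmax}) that any minimiser $Q_{tot}=\wProj Q$ satisfies $Q_{tot}(s,\approxU)\geq Q(s,\approxU)$ for every $\approxU\in\argmax Q_{tot}(s,\cdot)$ and, crucially, that at least one argmax action $\approxU'$ attains equality, $Q_{tot}(s,\approxU')=Q(s,\approxU')$: if every argmax action were strictly overestimated, lowering all of them by a small common $\epsilon$ (smaller than the action gap of $Q_{tot}$) would stay in $\qmixSpace$ and strictly decrease the loss. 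If the argmax is wrong, then $\approxU'\neq\mathbf{u}^*$, so $Q_{tot}(s,\mathbf{u}^*)\leq Q_{tot}(s,\approxU')=Q(s,\approxU')\leq v-\Delta$, and the weight-one term $(Q(s,\mathbf{u}^*)-Q_{tot}(s,\mathbf{u}^*))^2\geq\Delta^2$ alone already exceeds the witness loss $\alpha K_1$ once $\alpha<\Delta^2/K_1$; the paper uses the cruder uniform bound $K_1\leq(R_{max}/(1-\gamma))^2|U|^n$ and then takes $\alpha=\min_s\alpha_s$ over states. Incidentally, your monotonicity digression is unnecessary: $q^*_\alpha(s,\mathbf{u}')\geq q^*_\alpha(s,\mathbf{u}^*)$ holds simply because $\mathbf{u}'$ is an argmax of $q^*_\alpha$.
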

Theorem \ref{th:critic_w} provides a sanity check that this choice of weighting 
guarantees we recover a \qtot~with the correct argmax in this idealised 
setting, with a nonzero weighting for suboptimal actions.

\paragraph{\wname Weighting} The second weighting, which we call Optimistic Weighting, affords a practical implementation:
\begin{equation}
    w(s,\mathbf{u}) = 
    \begin{cases}
    1 & \qtot(s,\U) < Q(s,\U) \\
    \alpha & \text{otherwise}.
    \end{cases}
    \label{eq:hyst_w}
\end{equation}
This weighting assigns a higher weighting to those joint actions that are underestimated relative to $Q$,
and hence could be the true optimal actions (in an optimistic outlook).

\begin{theorem}
Let $w$ be the \wname Weighting from \eqref{eq:hyst_w}.
Then $\exists \alpha>0$ such that, $\argmax \Pi_w Q = \argmax Q$ for any $Q$.
\label{th:hyst_w}
\end{theorem}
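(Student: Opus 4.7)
The plan is to prove Theorem \ref{th:hyst_w} by contradiction, using a constant comparator in \qmixSpace~and leveraging the Cartesian-product structure of $\argmax$ on elements of \qmixSpace~ together with the self-consistent nature of the optimistic weighting.

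First I would rewrite the optimistic projection as the minimisation of the implicit objective
\[
L(q) := \sum_{\mathbf{u}} f_\mathbf{u}(q(\mathbf{u})), \qquad f_\mathbf{u}(x) := \begin{cases} (Q(\mathbf{u}) - x)^2 & x < Q(\mathbf{u}), \\ \alpha(Q(\mathbf{u}) - x)^2 & x \ge Q(\mathbf{u}), \end{cases}
\]
over $q \in \qmixSpace$. Each $f_\mathbf{u}$ is $C^1$ (both one-sided derivatives vanish at $x=Q(\mathbf{u})$), convex, and attains its minimum $0$ at $x = Q(\mathbf{u})$. Fixing $s$ and assuming (WLOG) that $\mathbf{u}^* = \argmax_\mathbf{u} Q(s, \mathbf{u})$ is unique with gap $\Delta = \min_{\mathbf{u} \neq \mathbf{u}^*}(Q(\mathbf{u}^*) - Q(\mathbf{u})) > 0$, I would take as comparator the constant function $\tilde q \equiv Q(s, \mathbf{u}^*) \in \qmixSpace$. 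Since $\tilde q \ge Q$ pointwise, every weight at $\tilde q$ is $\alpha$, giving $L(\tilde q) = \alpha K$ where $K := \sum_{\mathbf{u} \neq \mathbf{u}^*}(Q(\mathbf{u}^*) - Q(\mathbf{u}))^2$.

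Next I would let $q = \wProj Q$ and use optimality to deduce $L(q) \le \alpha K$, which forces $q(\mathbf{u}^*) \ge Q(\mathbf{u}^*) - \sqrt{\alpha K}$ by splitting on whether $\mathbf{u}^*$ is under- or over-estimated. Suppose for contradiction that $\mathbf{u}^* \notin \argmax q$: then some $\mathbf{u}' \neq \mathbf{u}^*$ has $q(\mathbf{u}') \ge q(\mathbf{u}^*) \ge Q(\mathbf{u}^*) - \sqrt{\alpha K}$. At this point I would invoke the Cartesian-product structure of $\argmax$ on \qmixSpace: writing $q = f_s(Q_1^s, \dots, Q_n^s)$ with monotonic $f_s$, we have $\argmax q = \prod_a \argmax_{u_a} Q_a^s(u_a)$, so $q$ attains its maximum on a whole product set of joint actions, none of which equals $\mathbf{u}^*$, each having true $Q$-value at most $Q(\mathbf{u}^*) - \Delta$, and hence each is overestimated by at least $\Delta - \sqrt{\alpha K}$ under weight $\alpha$.

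The contradiction then follows by choosing $\alpha$ small enough that $\sqrt{\alpha K} < \Delta/2$ and that the aggregate squared-overestimate contribution across the product argmax set strictly exceeds $\alpha K$. Equivalently, I would improve on the plain constant comparator by perturbing $\tilde q$ to a slightly non-constant $\hat q \in \qmixSpace$ that strictly prefers $\mathbf{u}^*$ (for instance $\hat q(\mathbf{u}) = Q(\mathbf{u}^*) - \epsilon \sum_a \mathbf{1}[u_a \neq u_a^*]$, which is a monotonic mixing of per-agent indicators), compute its loss to leading order in $\epsilon$, and use it as the tighter comparator to squeeze out the needed $\Omega(\alpha)$ slack over any wrong-argmax $q$. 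The main obstacle is precisely this book-keeping: both $\tilde q$ and any wrong-argmax $q$ give $L = O(\alpha)$, so the proof must extract the discriminating additive $\Omega(\alpha)$ term from the Cartesian argmax constraint rather than from a direct $O(1)$ versus $O(\alpha)$ separation. Handling ties in $\argmax Q$ and the degenerate $\Delta = 0$ case requires small additional arguments that should follow routinely.
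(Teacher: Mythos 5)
Your setup is sound and partially parallels the paper: the self-consistent objective $L(q)$ is the right reading of the optimistic weighting, the constant comparator $\tilde q \equiv Q(s,\mathbf{u}^*)$ (the paper uses an almost-constant one with a tiny bump at $\mathbf{u}^*$) correctly gives $L(\wProj Q) \le \alpha K$, and the deduction $q(\mathbf{u}^*) \ge Q(\mathbf{u}^*) - \sqrt{\alpha K}$ is valid. The gap is in how you lower-bound the loss of a wrong-argmax candidate. Your plan is to accumulate the $\alpha$-weighted squared overestimates over the Cartesian argmax set and beat $\alpha K$; but that set can be a singleton, so the contribution is only $\alpha(\Delta - \sqrt{\alpha K})^2$, which is far below $\alpha K$ whenever there are many suboptimal actions (recall $K \ge (|U|^n - 1)\Delta^2$ in the worst case). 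Your fallback --- tightening the comparator to $\hat q(\mathbf{u}) = Q(\mathbf{u}^*) - \epsilon\sum_a \mathbf{1}[u_a \neq u_a^*]$ --- only improves the upper bound by $O(\alpha\epsilon)$, which cannot close a deficit of order $\alpha(K - \Delta^2)$; moreover a wrong-argmax candidate such as $q(\mathbf{u}'') = Q(\mathbf{u}^*)+\delta$, $q \equiv Q(\mathbf{u}^*)$ elsewhere (realisable in \qmixSpace) has loss tending to $\alpha K$ as $\delta \to 0^+$, so no comparator argument alone that stays at the $O(\alpha)$ scale can separate the two classes. You explicitly assert that an $O(1)$ versus $O(\alpha)$ separation is unavailable; this is the wrong turn.

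The missing ingredient is the paper's Proposition \ref{prop:exists_argmax}: for any minimiser $Q_{tot} = \wProj Q$ and any state, there exists $\approxU' \in \argmax Q_{tot}(s,\cdot)$ with $Q_{tot}(s,\approxU') = Q(s,\approxU')$ (otherwise you could lower all argmax values by a small $\epsilon$ and strictly decrease the loss --- note this contradicts your claim that every argmax action is overestimated by at least $\Delta - \sqrt{\alpha K}$). If the argmax is wrong, this anchors $\max_{\mathbf{u}} Q_{tot}(s,\mathbf{u}) = Q(s,\approxU') \le Q(s,\mathbf{u}^*) - \Delta_s$, hence $Q_{tot}(s,\mathbf{u}^*) < Q(s,\mathbf{u}^*) - \Delta_s$, i.e.\ $\mathbf{u}^*$ is \emph{underestimated} by more than the action gap; the optimistic weighting then assigns it weight $1$, so the loss is at least $\Delta_s^2$ --- an $O(1)$ quantity. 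Comparing this against the $O(\alpha)$ comparator loss and taking $\alpha_s < \Delta_s^2(1-\gamma)^2/(R_{max}^2|U|^n)$ finishes the argument. In short, the discriminating term comes not from the product structure of the argmax but from the interaction between Proposition \ref{prop:exists_argmax} and the self-consistency of the weighting (underestimation $\Rightarrow$ weight $1$); combined with your own bound $q(\mathbf{u}^*) \ge Q(\mathbf{u}^*) - \sqrt{\alpha K}$ it even yields an immediate contradiction for $\alpha < \Delta_s^2/K$. Without that lemma or an equivalent, your proof does not go through.
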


Theorem \ref{th:hyst_w} shows that \wname Weighting also recovers a \qtot~with 
the correct argmax.

\subsection{Weighted QMIX Operators}

We have shown that these two weightings are guaranteed to recover the correct maximum joint action for any $Q$, and therefore for $Q^*$ as well.
This is in contrast to the uniform weighting ($w=1$) of QMIX, which can fail to 
recover the correct optimal joint action even for simple matrix games.
The weighted projection now allows us to fully take advantage of $Q^*$.

Since we do not have access to the true optimal value function in general, we 
learn an approximation to it: \qCentral, which does not need to lie in the 
restricted monotonic function space \qmixSpace.
Performing an exact maximisation of \qCentral~requires a search over the entire 
joint action space, which is typically intractable and does not admit 
decentralisation.
We instead use our QMIX approximation \qtot~to \textit{suggest} the maximum 
joint action(s), which can then be evaluated by \qCentral.

Learning \qCentral~instead of using \qtot~in its place brings some advantages. 
First, it allows us a richer representational class to approximate $Q^*$ with,
 since we place no restrictions on the form of \qCentral.
In the idealised tabular setting, $Q^*$ is exactly representable by \qCentral.
Second, since we are weighting each joint action in \wProj, \qtot~(unlike 
\qCentral) likely has less accurate estimates for those joint actions 
with a low weighting.
Due to these factors, we may bootstrap using more accurate estimates by 
using \qCentral~instead of \qtot.
These properties are necessary to ensure that WQMIX converges to the 
optimal policy.
The operator used to update \qCentral~is:
\begin{equation}
\wOper \qCentral (s,\mathbf{u}) := \mathbb{E} [r + \gamma \qCentral(s',\argmax_{\U'} \qtot(s',\U'))].
\label{eq:bellman_critic_opt}
\end{equation} 
Since $\qtot$ is monotonic, the argmax in \eqref{eq:bellman_critic_opt} is tractable. Similarly \qtot~is updated in tandem using:
\begin{equation}
\wqmixOper \qtot:= \wProj \wOper \qCentral
\end{equation}
\wOper~is similar to the Bellman Optimality Operator in \eqref{eq:bellman_opt} but does not directly maximise over \qCentral.
Instead it uses $\qtot \in \qmixSpace$
 to suggest a maximum joint action.
Setting $w$ to be uniform ($w = 1$) here does not recover QMIX since 
we are additionally learning \qCentral.

Finally, using our previous results, we show that \qCentral~converges to $Q^*$ and that \qtot~recovers an optimal policy.
This provides a firm theoretical foundation for Weighted QMIX: in an idealised setting it converges to the optimal policy, whereas QMIX does not.  

\begin{corollary}
\label{cor:fixed_point}
Letting $w$ be the Idealised Central or \wname Weighting, then
$\exists \alpha > 0$ such that
the unique fixed point of \wOper~is $Q^*$.
Furthermore, $\wProj Q^* \subseteq \qmixSpace$ recovers an optimal policy, and
$\max \wProj Q^*(s,\cdot) = \max Q^*(s,\cdot)$.
\end{corollary}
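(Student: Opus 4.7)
My plan is to leverage Theorems \ref{th:critic_w} and \ref{th:hyst_w} to show that $\wOper$ coincides with the standard Bellman optimality operator $\optimOper$ when applied to $\qCentral$, after which the fixed-point claim follows from classical contraction theory, and the policy and value claims follow by instantiating Theorems \ref{th:critic_w}/\ref{th:hyst_w} at $Q = Q^*$ together with the representability fact already used in Section \ref{sec:qmix_properties}.

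First I would pick $\alpha>0$ small enough that the argmax-preservation of Theorems \ref{th:critic_w}/\ref{th:hyst_w} holds uniformly along the iteration. Then for any $\qCentral$, writing $\qtot \in \wProj \qCentral$, those theorems give $\argmax_{\U'} \qtot(s',\U') = \argmax_{\U'} \qCentral(s',\U')$ for every $s'$. Substituting into \eqref{eq:bellman_critic_opt},
\begin{align*}
\wOper \qCentral(s,\U)
&= \mathbb{E}\!\left[r + \gamma \qCentral\!\left(s', \argmax_{\U'} \qtot(s',\U')\right)\right] \\
&= \mathbb{E}\!\left[r + \gamma \max_{\U'} \qCentral(s',\U')\right] \\
&= \optimOper \qCentral(s,\U).
\end{align*}
Thus $\wOper \equiv \optimOper$ as operators on $\qCentral$, and the standard $\gamma$-contraction of $\optimOper$ in sup-norm immediately yields that $\wOper$ is a $\gamma$-contraction with unique fixed point $Q^*$.

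For the remaining two claims, I would instantiate Theorems \ref{th:critic_w}/\ref{th:hyst_w} with $Q = Q^*$. Every element of $\wProj Q^*$ lies in \qmixSpace by definition of the projection and satisfies $\argmax \wProj Q^*(s,\cdot) = \argmax Q^*(s,\cdot)$, so the induced decentralised greedy policy agrees with the greedy policy with respect to $Q^*$ and is therefore optimal. The value identity $\max \wProj Q^*(s,\cdot) = \max Q^*(s,\cdot)$ then follows from the statement noted in Section \ref{sec:qmix_properties} and proven in Appendix \ref{app:proofs}, namely that whenever a monotonic projection has the correct argmax it represents the true maximum exactly.

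The main conceptual step is the operator identification $\wOper \equiv \optimOper$, which reduces a seemingly nonstandard coupled update (where $\qtot$ feeds into $\qCentral$'s target) to a single classical Bellman iteration and trivialises the fixed-point analysis. A small subtlety is that $\wProj$ is set-valued (its argmin need not be unique), but Theorems \ref{th:critic_w} and \ref{th:hyst_w} already assert argmax-preservation for every minimiser, so the value and policy statements hold for each selection without additional argument.
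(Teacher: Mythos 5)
Your proposal is correct and follows essentially the same route as the paper's own proof: both identify $\wOper$ with the standard Bellman optimality operator $\optimOper$ via the argmax-preservation of Theorems \ref{th:critic_w} and \ref{th:hyst_w}, invoke the classical uniqueness of $Q^*$ as its fixed point, and then obtain the policy and value claims by applying those theorems and Proposition \ref{prop:wargmax} at $Q = Q^*$. The one subtlety you flag (choosing $\alpha$ uniformly along the iteration) is also present, and left implicit, in the paper's argument.
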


In this section we have shown that an idealised version of Weighted QMIX can converge to $Q^*$ and recover an optimal policy.
Restricting \qtot~to lie in \qmixSpace~does not 
prevent us from representing an optimal policy, since there is \emph{always} an 
optimal deterministic policy \citep{puterman2014markov} and all deterministic 
policies can be derived from the argmax of a $Q$ that lies in \qmixSpace.
Thus, we do \emph{not} expand the function class that we consider for 
\qtot.
Instead, we change the solution of the projection by introducing a 
weighting.

\section{Deep RL Algorithm}
\label{sec:deeprl}

So far, we have only considered an idealised setting in order to analyse the 
fundamental properties of QMIX and Weighted QMIX.
However, the ultimate goal of our analysis is to inform the development of new 
scalable RL algorithms, in combination with, e.g., neural network function 
approximators.
We now describe the realisation of Weighted QMIX for deep RL,
in a Dec-POMDP setting in which each agent does not observe the full state, as described in Section \ref{sec:background}.

There are three components to Weighted QMIX: 
1)  \qtot, i.e., the per-agent utilities $Q_a$ (from which the decentralised policies are derived) and the mixing network, 2) an unrestricted joint action \qCentral, and 3) a weighting function $w$, as in \wProj.

\vspace{-0.3cm}
\paragraph{$\mathbf{\qtot}$} 
The \qtot~component is largely the same as that of 
\citet{rashid2018qmix}, using the architecture from \citet{samvelyan19smac}.
$Q_{tot}$ is trained to minimise the following loss:
\begin{equation}
\sum_{i=1}^b w(s, \U)(\qtot(\boldsymbol{\tau}, \U, s) - y_i)^2,
\label{eq:qmix_loss}
\end{equation}
where $y_i := r + \gamma \qCentral(s', \boldsymbol{\tau'}, \argmax_{\U'} Q_{tot}(\boldsymbol{\tau'}, \U',s'))$ is treated as a fixed target. 
This differs from the idealised setting considered in Sections \ref{sec:tabular_qmix} and \ref{sec:weighted_qmix} because we are now only optimising the $Q$-values for the state-action pairs present in the minibatch sampled from the replay buffer, as opposed to every state-action pair.

\begin{figure}
    \centering
    \includegraphics[width=0.8\textwidth]{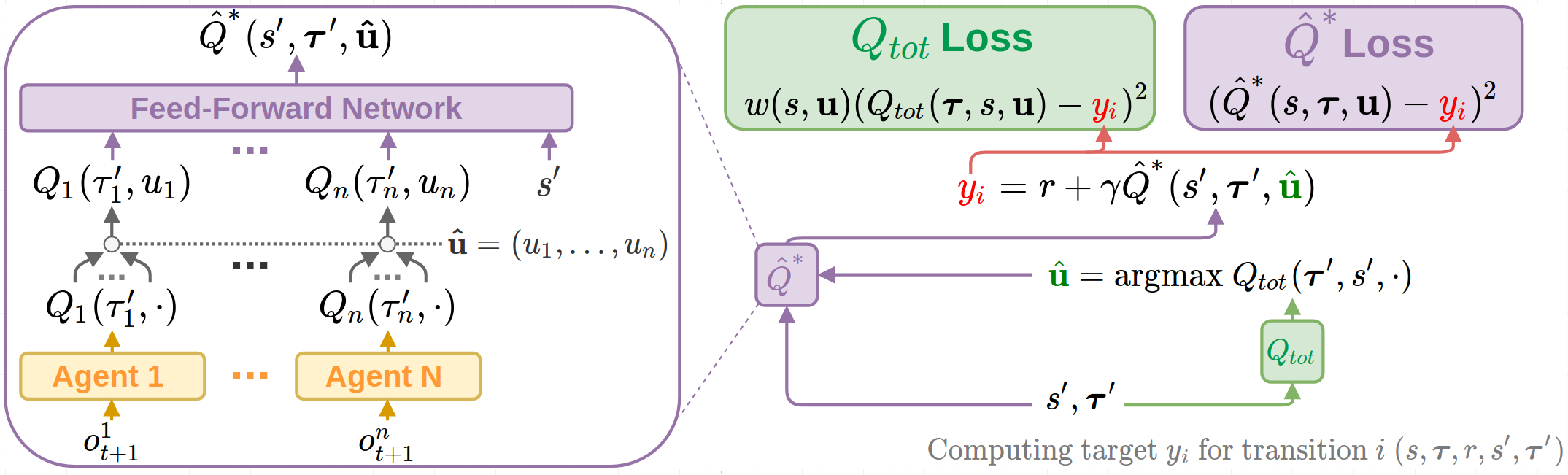}
    \caption{Deep RL Weighted QMIX setup. \textbf{Left:} The architecture used for \qCentral~. \textbf{Right:} How the targets $y_i$~ for each transition in the minibatch are computed and used.}
    \label{fig:wqmix_targets}
\end{figure}

\paragraph{Centralised \qCentral}
We use a similar architecture to \qtot~to represent \qCentral: agent networks 
whose chosen action's utility is fed into a mixing network.
\qCentral~is thus conditioned on the state $s$ and the agents' 
action-observation histories $\boldsymbol{\tau}$.
For the agent networks we use the same architecture as QMIX. They do not share parameters with the agents used in representing \qtot.
The mixing network for \qCentral~is a feed-forward network that takes the state 
and the appropriate actions' utilities as input. 
This mixing network is not constrained to be monotonic by using non-negative 
weights.
Consequently, we can simplify the architecture by having the state and agent 
utilities be inputs to \qCentral's mixing network, as opposed to having 
hypernetworks take the state as input and generate the weights.  
The architecture of \qCentral~is shown in Figure \ref{fig:wqmix_targets} (Left).
\qCentral~is trained to minimise the following loss, using $y_i$ from \eqref{eq:qmix_loss}: 
\begin{equation}
\sum_{i=1}^b (\qCentral(s, \boldsymbol{\tau}, \U) - y_i)^2.
\label{eq:qcentral_loss}
\end{equation}

\paragraph{Weighting Function}
Idealised Central Weighting requires knowing the maximal joint action over \qCentral, which is computationally infeasible.
In order to derive a practical algorithm, we must make approximations.
For each state-action pair in the sampled minibatch, 
the weighting we use is:
\begin{equation}
    w(s, \mathbf{u}) = 
    \begin{cases}
    1 & y_i > \qCentral(s, \boldsymbol{\tau}, \approxU^*) ~~\text{or}~~ \U = \approxU^* \\
    \alpha & \text{otherwise},
    \end{cases}
    \label{eq:deeprl_critic_w}
\end{equation}
where $\approxU^* = \argmax_{\U} Q_{tot}(\boldsymbol{\tau}, \U, s)$.
Since we do not know the maximal joint action for each state, we make a local approximation: 
if $y_i > \qCentral(s, \boldsymbol{\tau}, \approxU^*)$, then $\U$ might be the best joint action.
We use $\qCentral(s, \boldsymbol{\tau}, \approxU^*)$ instead of $\wOper \qCentral(s, \boldsymbol{\tau}, \approxU^*)$ since we do not have direct access to it.
We refer to this weighting function as \textbf{Centrally-Weighted QMIX (CW)}.

The \wname Weighting presented in \eqref{eq:hyst_w} does not require any approximations.
The exact weighting we use is:
\begin{equation*}
    w(s, \mathbf{u}) = 
    \begin{cases}
    1 & \qtot(\boldsymbol{\tau}, \U, s) < y_i \\
    \alpha & \text{otherwise}.
    \end{cases}
    \label{eq:deeprl_hyst_w}
\end{equation*}
We refer to it as \textbf{Optimistically-Weighted QMIX (OW)}.

In a deep RL setting, QMIX implicitly weights the joint-actions proportional to 
their execution by the behaviour policies used to fill the replay buffer.
This forces QMIX to make trade-offs in its $Q$-value approximation that are directly tied to the exploration strategy chosen.
However, as we have shown earlier, this can lead to poor estimates for the optimal joint action and thus yield suboptimal policies. 
Instead, Weighted QMIX separates the weighting of the joint actions from the 
behaviour policy.
This allows us to focus our monotonic approximation of $Q^*$ on the \textit{important} joint actions, thus encouraging better policies to be recovered irrespective of the exploration performed.

\section{Results}
\label{sec:results}

In this section we present our experimental results on the Predator Prey task considered by \citet{bhmer2019deep} and on a variety of SMAC\footnote{We utilise \texttt{SC2.4.6.2.69232} (the same version as \citep{samvelyan19smac}) instead of the newer \texttt{SC2.4.10}. Performance is \textbf{not} comparable across versions.} scenarios.
More details about the implementation of each are included in Appendix \ref{sec:setup}, as well as additional ablation experiments (Appendix \ref{app:results}).
For every graph we plot the median and shade the 25\%-75\% quartile.
Code is available at \url{https://github.com/oxwhirl/wqmix}.

\subsection{Predator Prey}

\begin{wrapfigure}{r}{0.50\textwidth}
    \centering
    \includegraphics[width=0.50\columnwidth]{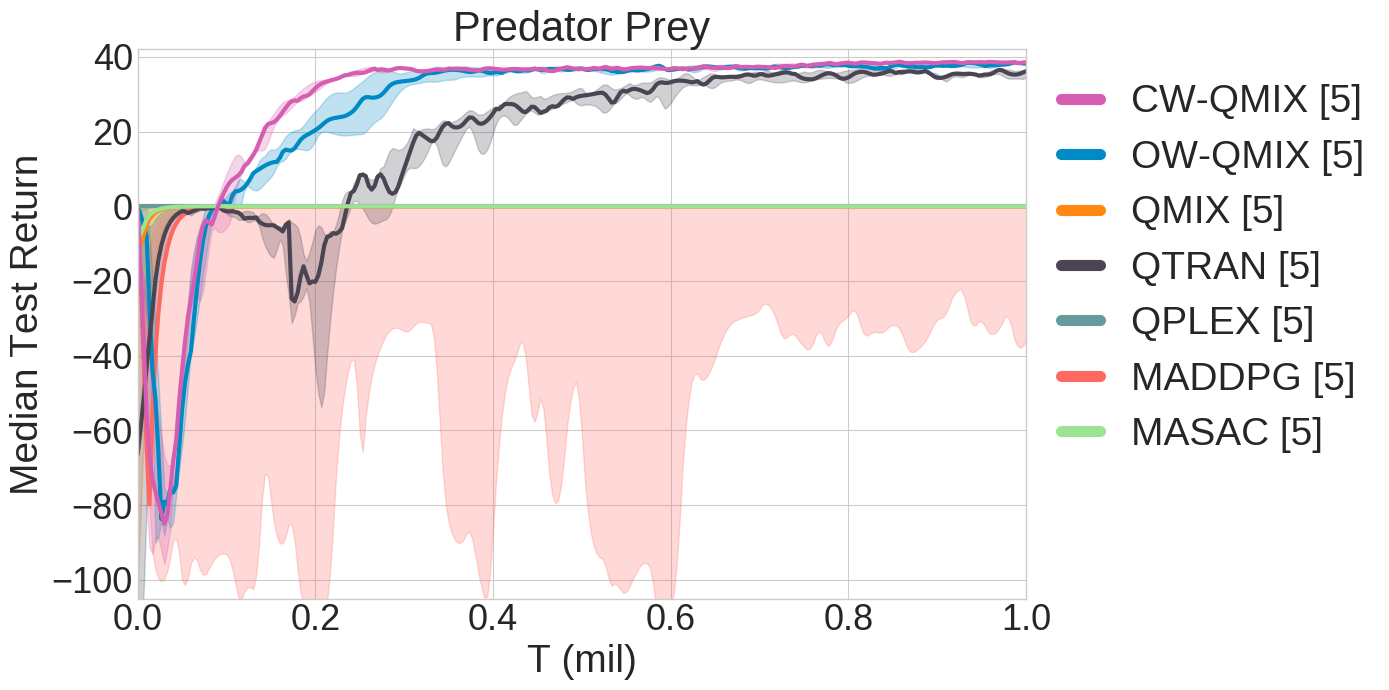}
    \caption{Median test return for the predator prey environment, comparing Weighted QMIX and baselines.}
    \label{graph:pred_prey}
\end{wrapfigure}
We first consider a partially-observable Predator Prey task involving 8 agents \citep{bhmer2019deep}, which 
was designed to test coordination between agents
by providing a punishment of -2 reward when only a single agent (as opposed to two agents) attempt to capture a prey.
Algorithms which suffer from \textit{relative overgeneralisation} \citep{panait2006biasing,wei2018multiagent}, or which make poor trade-offs in their representation (as VDN and QMIX do) can fail to solve this task.

As shown in \citep{bhmer2019deep}, QMIX fails to learn a policy that achieves positive test reward, and our results additionally show the same negative results for MADDPG and MASAC.
Interestingly, QPLEX also fails to solve the task despite not having any restrictions on the joint-action $Q$-values it can represent, suggesting difficulties in learning certain value functions.
Figure \ref{graph:pred_prey} shows that both CW-QMIX and OW-QMIX solve the task faster than QTRAN.

\subsection{SMAC}

\subsubsection{Robustness to increased exploration}
\begin{figure}[h]
    \centering
    \includegraphics[width=0.49\columnwidth]{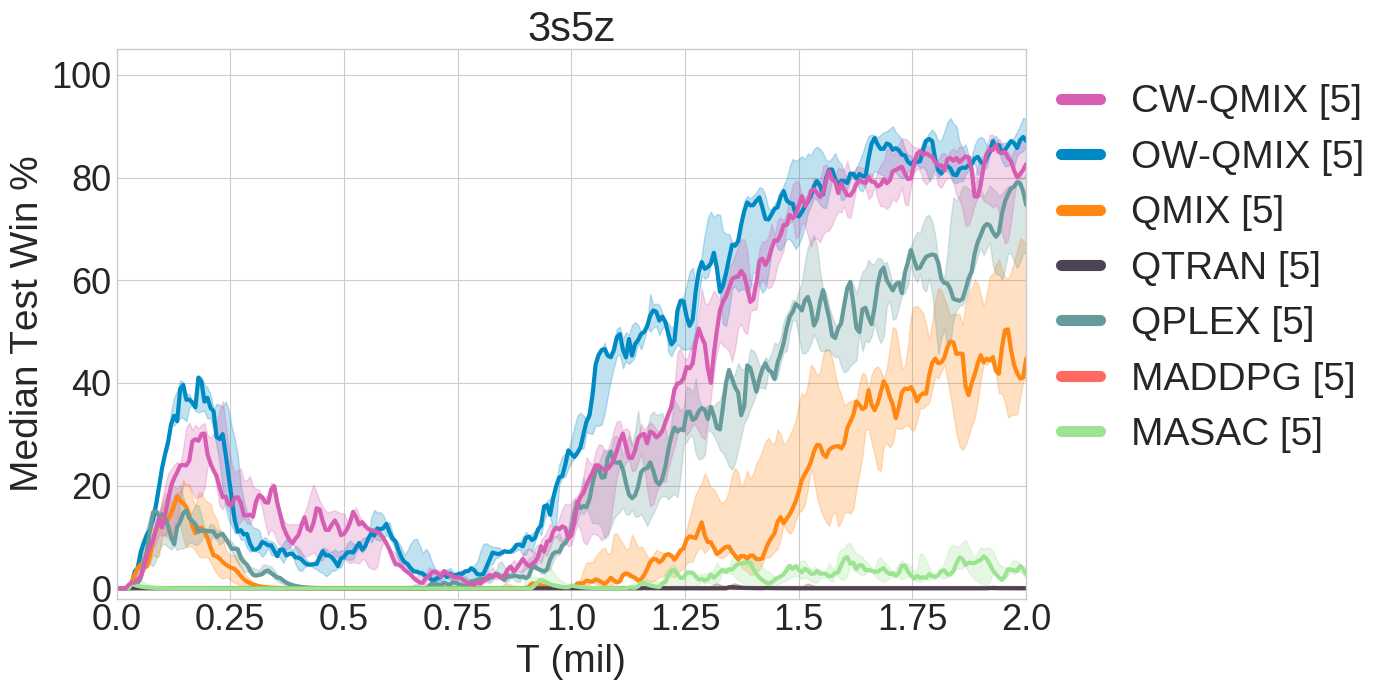}
    \includegraphics[width=0.49\columnwidth]{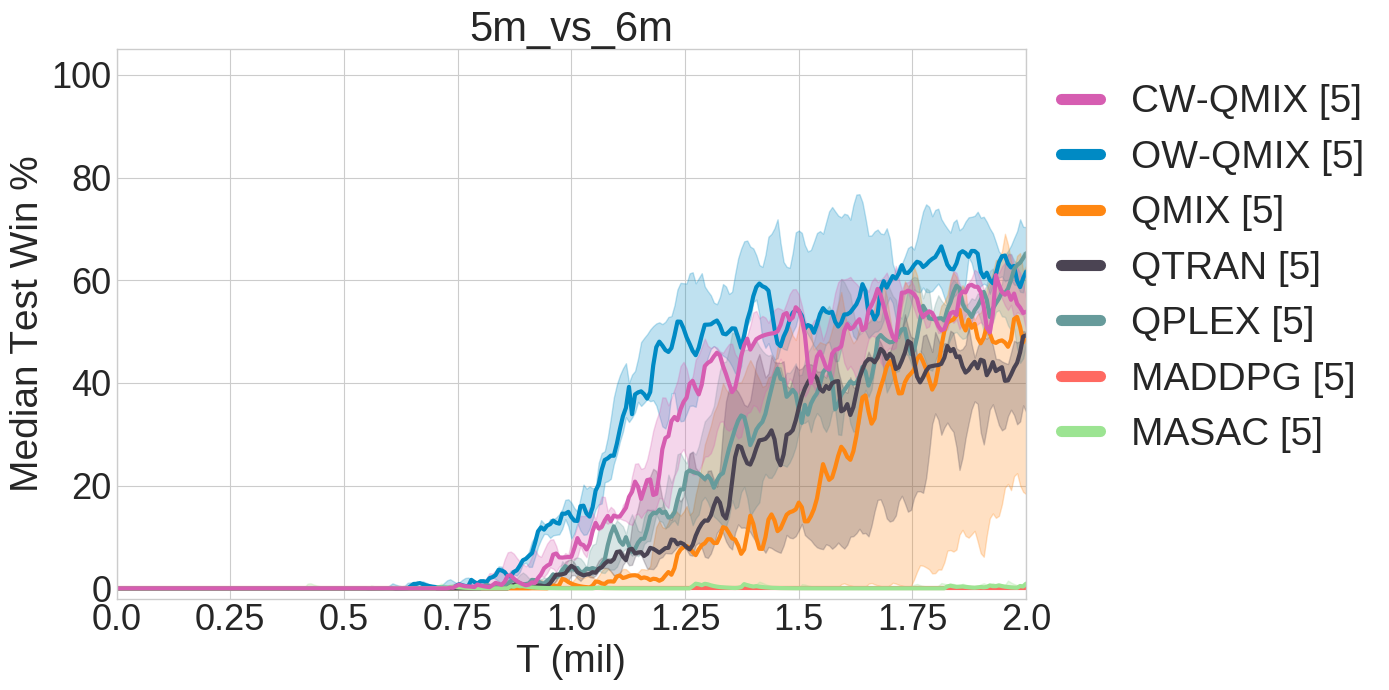}
    \caption{Median test win \% with an increased rate of exploration.}
    \label{graph:1mil_eps_smac}
\end{figure}
QMIX is particularly brittle when there is significant exploration being 
done, since it then tries (and fails) to represent the value of many suboptimal 
joint actions.
We would like our algorithms to efficiently learn from exploratory behaviour.
Hence, we use two of the easier SMAC maps, \textit{3s5z} and 
\textit{5m\_vs\_6m}, to test the robustness of our algorithms to exploration.
We use an $\epsilon$-greedy policy in which $\epsilon$ is 
annealed from $1$ to $0.05$ over 1 million timesteps, increased from 
the $50k$ used in \citep{samvelyan19smac}.
Figure \ref{graph:1mil_eps_smac} shows the results of these experiments, in which both Weighted QMIX variants significantly outperform all baselines.
Figure \ref{graph:6h8z_bvb} (Left) additionally compares the performance of QMIX and Weighted QMIX on \texttt{bane\_vs\_bane}, a task with 24 agents, across two $\epsilon$-schedules. We can see that both variants of Weighted QMIX are able to solve the task irrespective of the level of exploration, whereas QMIX fails to do so. 

\subsubsection{Necessity of increased exploration}

\begin{figure}[h]
    \centering
    \includegraphics[width=0.49\columnwidth]{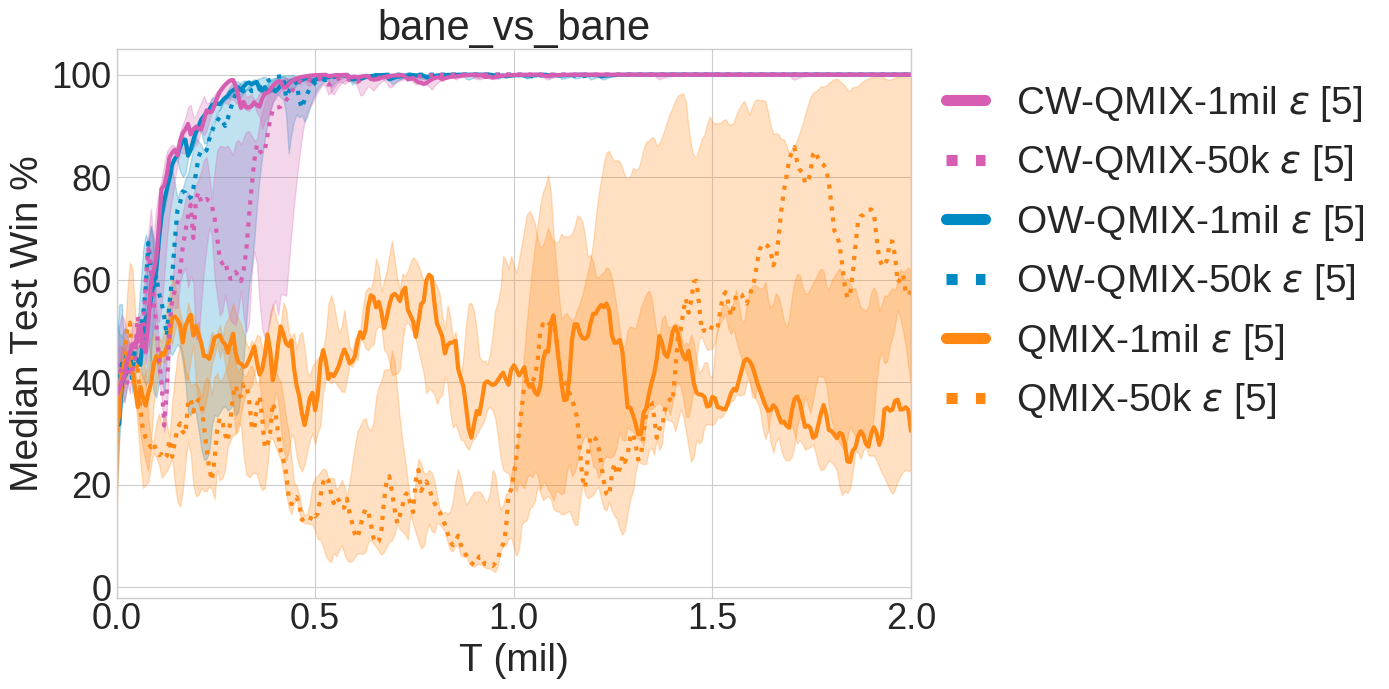}
    \includegraphics[width=0.49\columnwidth]{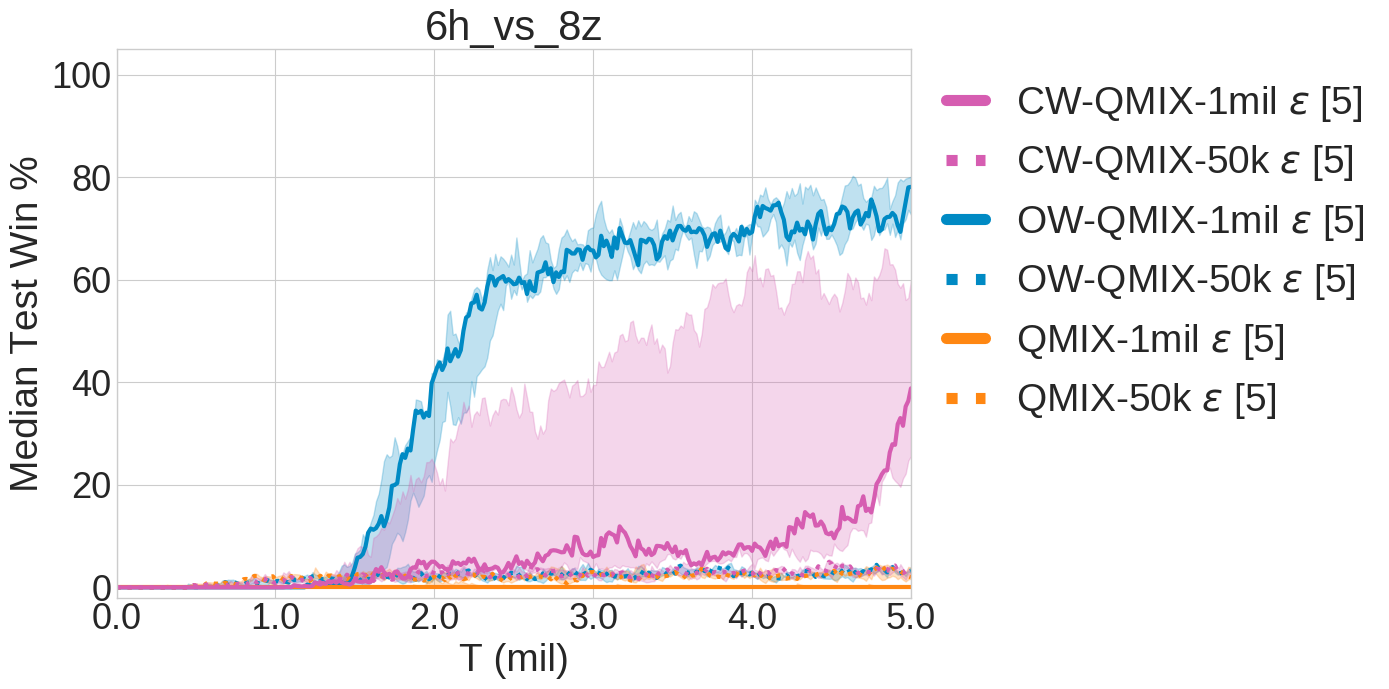}
    \caption{Median test win \% with 2 exploration schedules on \texttt{bane\_vs\_bane} and \textit{6h\_vs\_8z}.}
    \vspace{-0.5cm}
    \label{graph:6h8z_bvb}
\end{figure}

Next, we compare our method on the challenging \textit{6h\_vs\_8z}, which is classified as a \textit{super hard} SMAC map due to current method's poor performance \citep{samvelyan19smac}.
Figure \ref{graph:6h8z_bvb} (Right) compares QMIX and Weighted QMIX with two differing exploration schedules (annealing 
$\epsilon$ over 50k or 1 million timesteps, denoted -50k $\epsilon$ and -1mil 
$\epsilon$ respectively).
We can see that a larger rate of exploration is required, and only Weighted QMIX can successfully recover a good policy, demonstrating the benefits of our method for improving performance in a challenging coordination problem.

\subsubsection{Limitations}
\begin{figure}[h]
    \centering
    \includegraphics[width=0.49\columnwidth]{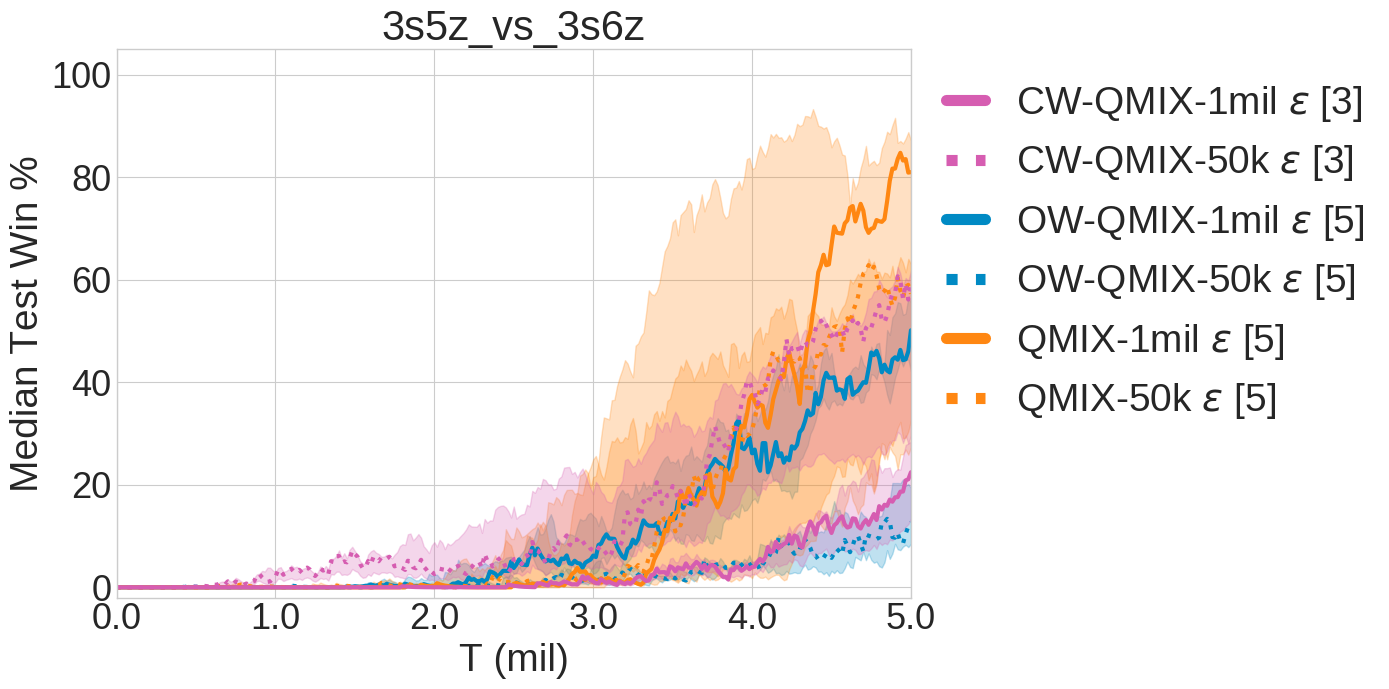}
    \includegraphics[width=0.49\columnwidth]{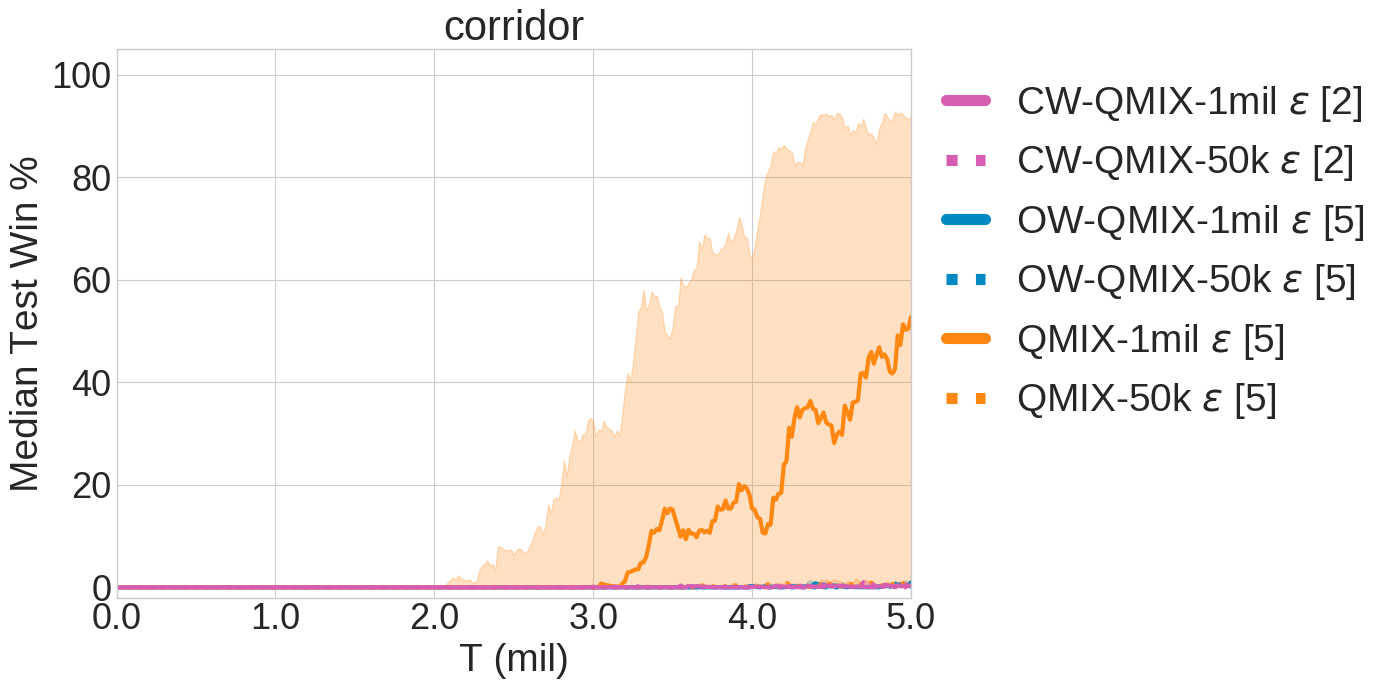}
    \caption{Median test win \% with 2 exploration schedules on 2 \textit{super-hard} SMAC maps.}
    \label{graph:hard_maps}
\end{figure}

Finally, we present results on 2 more \textit{super hard} SMAC maps in order to show the limitations of our method in Figure \ref{graph:hard_maps}.
In \textit{3s5z\_vs\_3s6z} we observe that the extra exploration is not helpful for any method.
Since QMIX is almost able to solve the task, 
this indicates that both exploration and the restricted function class of QMIX are not limiting factors in this scenario.
On \textit{corridor} we see that only QMIX with an extended exploration schedule manages non-zero performance,
showing the importance of sufficient exploration on this map.
The poor performance of Weighted QMIX shows that the extra complexity of our 
method (notably learning \qCentral) can sometimes harm performance, indicating 
that closer attention must be paid to the architecture and weighting functions.

\begin{wrapfigure}{r}{0.4\textwidth}
    \centering
    \includegraphics[width=0.39\columnwidth]{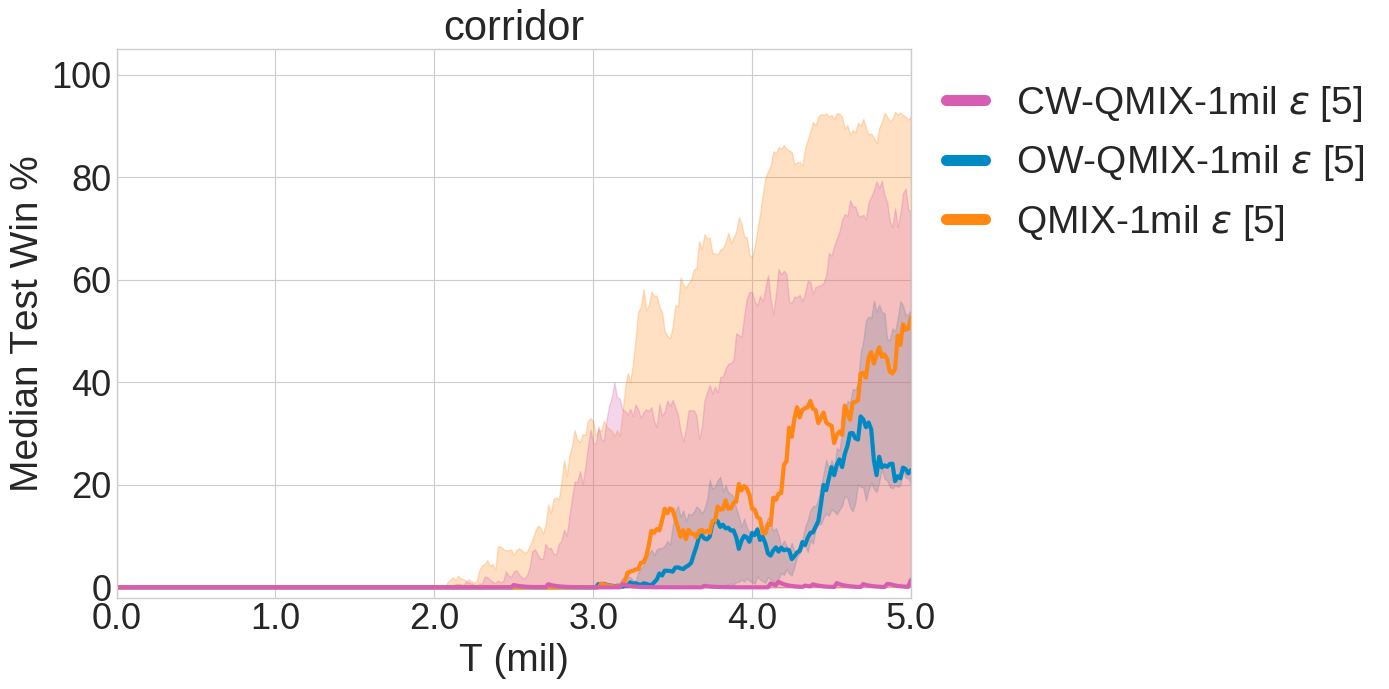}
    \caption{Median test win \% on \textit{corridor} with a modified architecture for \qCentral.}
    \label{graph:corridor_arch}
    \vspace{-0.8cm}
\end{wrapfigure}

Figure \ref{graph:corridor_arch} shows the results on \textit{corridor} for Weighted QMIX with a slightly modified architecture for \qCentral. 
The significantly improved performance for Weighted QMIX indicates that the architecture of \qCentral~is partly responsible for the regression in performance over QMIX.  

\section{Conclusions and Future Work}
\label{sec:conclusion}

This paper presented Weighted QMIX, 
which was inspired by analysing an idealised version of QMIX that first computes the $Q$-learning targets and then projects them into \qmixSpace.
QMIX uses an unweighted projection that places the same emphasis on every joint action, which can lead to suboptimal policies.
Weighted QMIX rectifies this by using a weighted projection that allows more emphasis to be placed on \textit{better} joint actions.
We formally proved that for two specific weightings, the weighted projection is guaranteed to recover the correct maximal joint action for any $Q$.
To fully take advantage of this, we additionally learn an unrestricted joint 
action \qCentral, and prove that it converges to $Q^*$.
We extended Weighted QMIX to deep RL and showed its improved ability to coordinate and its robustness to an increased rate of exploration.
For future work, more complicated weightings could be considered, as opposed to the simplicitic weightings we used where $w$ is either 1 or $\alpha$ in this paper.
Additionally, some of our results demonstrate the limitations of our method, partially stemming from the architecture used for \qCentral.
\section*{Broader Impact} 

Due to the broad applicability of cooperative Multi-Agent Reinforcement Learning, we limit our discussion to the cooperative setting in which agents must act independently without communication.
One such potential application is in self-driving cars, in which the agents should be able to make safe and sensible decisions even without access to a communication network. 
Due to the sample inefficiency of current RL methods, combined with their lack of safe exploration it is also necessary to first train agents in a simulated setting before even beginning to train in the real world. 
Hence, the class of algorithms we consider in this paper could be used to train agents in these scenarios and would likely be chosen over fully-decentralised options.
It is important then to obtain a better understanding of the current approaches, in particular of their limitations.
In this paper, we focus primarily on the limitations of QMIX due to its strong performance \citep{samvelyan19smac}.
We also investigate the links between QTRAN and our algorithm and observe poor empirical performance for Actor-Critic style approaches.
Investigating all of these further should improve the performance of all algorithms in this domain, and provide a better understanding of their relative strengths and weaknesses. 

One particular limitation of QMIX is that it can fail in environments in which an agent's best action is dependent on the actions the other agents take, i.e., in environments in which agents must \textit{coordinate} at the same timestep.
However, in a Multi-Agent setting it is often \textit{crucial} to coordinate with the other agents.
Our approach lifts this restriction, and is theoretically able to learn the optimal policy in any environment which greatly increases its applicability.
Extending the capabilities of cooperative MARL algorithms should further extend the applicability of these algorithms in a broader range of applications.
However, our approach introduces extra complexity and can perform poorly in certain challenging domains.
It is important then to consider whether the extra modelling capacity of our method is required to achieve good performance on a selected task.

\section*{Acknowledgments and Disclosure of Funding}

We thank the members of the Whiteson Research Lab for their helpful feedback.
This project has received funding from the European Research
Council (ERC), under the European Union’s Horizon 2020 research and innovation programme
(grant agreement number 637713). It was also supported by an EPSRC grant (EP/M508111/1,
EP/N509711/1) and the UK EPSRC CDT in Autonomous
Intelligent Machines and Systems.
The experiments were made possible by a generous equipment grant from NVIDIA.
Shimon Whiteson is the Head of Research at Waymo, UK.

\bibliography{references}
\bibliographystyle{plainnat}
\onecolumn
\newpage
\appendix

\section{Related Work}
\label{sec:related_work}

In this section we briefly describe related work on cooperative MARL in the common paradigm of
Centralised Training and Decentralised Execution (CTDE).
For a more comprehensive survey of cooperative MARL, see \citep{oroojlooyjadid2019review}.

\citet{tampuu_multiagent_2015} train each agent's DQN \citep{mnih2015human} using Independent $Q$-learning \citep{tan_multi-agent_1993}, which treats the other agents as part of the environment,
which can lead to many pathologies and instability during training \citep{claus1998dynamics,foerster2017stabilising}.
By contrast VDN \citep{sunehag2018value} and QMIX \citep{rashid2018qmix} learn the joint-action $Q$-values, which avoids some of these issues. 
Qatten \citep{yang2020qatten} change the architecture of QMIX's mixing network to a 2-layer linear mixing, in which the weights of the first layer are produced through an attention-based mechanism.
\citet{bohmer2019exploration} learn a centralised joint action $Q$-Value that is approximately maximised by coordinate descent and used to generate trajectories that IQL agents train on.
SMIX$(\lambda)$ \citep{yao_smixlambda:_2019} replaces the 1-step $Q$-learning target with a SARSA$(\lambda)$ target. 
Despite claiming their method can represent a larger class of joint action 
$Q$-values, they can represent exactly the same class as QMIX (\qmixSpace)
since they use the same architecture (in particular, non-negative weights in 
the mixing network). 
\citet{yang2020q} utilise integrated gradients to decompose the joint-action $Q$-values of the critic into individual utilities for each agent, a form of multi-agent credit assignment.
The agents are then regressed against their respective utilities.

\citet{mahajan2019maven} point out some limitations of QMIX arising from its monotonic factorisation.
Specifically, they show that for a specific choice of matrix game, QMIX can fail to learn an optimal policy if each joint action is visited uniformly, which corresponds to our idealised tabular setting in Section \ref{sec:tabular_qmix}.
Additionally, they show that a lower bound on the probability of recovering the optimal policy increases for an $\epsilon$-greedy policy as $\epsilon$ increases.
This is proved by considering the weighting on each joint-action induced by the exploration policy.
By contrast, our weighting is independent of the exploration strategy, adding flexibility.
\wname Weighting uses a smaller weighting when decreasing \qtot~estimates, 
similarly to Hysteretic $Q$-learning \citep{matignon2007hysteretic} which uses 
a smaller learning rate when decreasing value estimates of independent 
learners.

\textbf{Relationship to Actor-Critic.}
Weighted QMIX bears many similarities to an off-policy actor-critic algorithm, 
if we view \qCentral~as the critic and the policy implied by \qtot~as the actor.
Define the deterministic QMIX greedy policy (assuming full observability to simplify the presentation) as: 
\begin{equation*}
\pi_{w}(s) = \begin{pmatrix}
\argmax_{u_1}Q_1(s, u_1)\\
\hdots \\
\argmax_{u_n}Q_n(s, u_n) 
\end{pmatrix}.
\end{equation*}
Weighted QMIX trains \qCentral~to approximate $Q^{\pi_w}$, the $Q$-values of this policy.
This is also an approximation to $Q$-learning since $\pi_{w} \approx \argmax \qCentral$.
Viewed in this manner, Weighted QMIX is similar to MADDPG \citep{lowe2017multi} with a single critic, except for how the actors are trained. MADDPG trains each agent's policy $\pi_a$ via the multi-agent deterministic policy gradient theorem 
, whereas Weighted QMIX trains the policy indirectly by training $Q_{tot}$ via the weighted loss in \eqref{eq:qmix_loss}.
Multi-agent Soft Actor Critic (MASAC) \citep{haarnoja2018soft,iqbal2019actor} is another off-policy actor-critic based approach that instead trains the actors by minimising the KL divergence between each agent's policies and the joint-action $Q$-values.
These actor-critic based approaches (as well as COMA \citep{foerster_counterfactual_2017} and LIIR \citep{du_liir:_2019}) do not restrict the class of joint action $Q$-values they can represent, which theoretically allows them to learn an optimal value function and policy.
However, in practice they do not perform as well as QMIX, perhaps due to 
relative overgeneralisation \citep{wei2018multiagent} or the presence of bad 
local minima.

\textbf{Relationship To QTRAN.}
QTRAN \citep{son2019qtran} is another $Q$-learning based algorithm that 
learns an unrestricted joint action $Q$ function and aims to solve a constrained optimisation problem in order to decentralise it. However, it is empirically hard to scale to more complex tasks (such as SMAC).

We can view QTRAN as specific choices of the 3 components of Weighted QMIX, 
which allows us to better understand its trade-offs and empirical performance 
in relation to WQMIX.
However, the motivations for QTRAN are significantly different.
$\mathbf{\qtot}$ is represented using VDN instead of QMIX, and trained 
using \qCentral~as the target (instead of $y_i$). 
This can limit QTRAN's empirical performance because QMIX generally outperforms 
VDN \citep{samvelyan19smac}.
$\mathbf{\qCentral}$ is a network that takes as input an embedding of 
all agents' chosen actions and observations (and additionally the state if it 
is available). The agent components share parameters with the agent networks 
used to approximate \qtot.

\textbf{Weighting.} The weighting function is as follows:
\begin{equation}
    w(s, \mathbf{u}) = 
    \begin{cases}
    \lambda_{opt} & \mathbf{u} = \mathbf{\hat{u}} \\
    \lambda_{nopt} & \qtot(s,\mathbf{u}) < \qCentral(s,\mathbf{u}) \\
    0 & \text{otherwise},
    \end{cases}
    \label{eq:qtran_w}
\end{equation}
where $\mathbf{\hat{u}} = \argmax Q_{tot}(s,\cdot)$.
Using too small a weight in the weighting can have a substantial negative effect on performance (as we show in Appendix \ref{app:results}).
However, using a 0 weight for overestimated $Q$-values is a fundamental part of the QTRAN algorithm.

\textbf{Concurrent Work.} 
\citet{wang2020qplex} propose QPLEX, which also expands the class of joint-action $Q$-values that can be represented. 
They acheive this by decomposing $Q_{tot}$ as a sum of a value function and a non-positive advantage function. 
Crucially, this advantage function is $0$ for the joint-action in which every agent maximises their own utilities ($\hat{\mathbf{u}}^*$ in our notation).
This ensures consistency between the agent's greedy joint-action ($\hat{\mathbf{u}}^*$), and the true maximum joint-action of $Q_{tot}$.
In contrast, Weighted QMIX does not maintain this consistency.
Our experimental results show that despite not restricting the class of joint-action $Q$-values that can be represented, QPLEX can struggle to learn a good policy in some environments like QMIX.

\citet{son2020qopt} propose QOPT, which also learns an unrestricted $\qCentral$ and utilises an optimistic-style weighting to train $Q_{tot}$ that is represented by QMIX.
In contrast to Weighted QMIX, whose $\qCentral$ does not share any parameters with $Q_{tot}$, the unrestricted joint-action $Q$-values of QOPT are obtained through an unrestricted mixing network (the weights are not constrained to be non-negative) which takes as input the agent utilities.
The weighting function used for training $Q_{tot}$ is similar to the weighting for CW-QMIX, in which a smaller weighting is used for the the joint-actions whose $Q$-values are estimated as lower than the $Q$-values for the approximate best joint-action ($\hat{\mathbf{u}}^*$).   

\section{Proof of Theorems}
\label{app:proofs}

\begin{proposition}
\label{prop:wargmax}
For any $w: S \times \mathbf{U} \rightarrow (0,1]$ 
and $Q$.
Let $Q_{tot} = \wProj Q$. 
Then $\forall s \in S$, $\approxU \in \argmax Q_{tot}(s,\cdot)$.
We have that $Q_{tot}(s, \approxU) \geq Q(s, \approxU)$.
If $\approxU = \U^* := \argmax_{\U} Q(s,\U)$ then $Q_{tot}(s,\approxU) = Q(s, \approxU)$.
\end{proposition}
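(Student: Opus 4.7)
The plan is to prove both parts by contradiction, exploiting the fact that $\qtot = \wProj Q$ is a minimiser of the weighted squared error. The key structural observation I would use is that, for $\qtot = f_s(Q_1(s,\cdot),\dots,Q_n(s,\cdot)) \in \qmixSpace$, the joint action $\approxU$ corresponding to each agent picking $\hat{u}_a \in \argmax_u Q_a(s,u)$ is mapped to the ``corner'' grid point $\hat{p} := (Q_1(s,\hat{u}_1),\dots,Q_n(s,\hat{u}_n))$ that component-wise dominates every other grid point in the input of $f_s$. By monotonicity this $\approxU$ is automatically in $\argmax \qtot(s,\cdot)$, and the single scalar $f_s(\hat{p}) = \qtot(s,\approxU)$ can be raised arbitrarily without violating monotonicity --- its only constraint is to lie above the largest $f_s$ value on any other grid point, which it already satisfies.

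For part~(a), suppose for contradiction that $\qtot(s,\approxU) < Q(s,\approxU)$. In the generic case where $\approxU$ is the unique joint action mapping to $\hat{p}$, simply overwriting $f_s(\hat{p}) \gets Q(s,\approxU)$ keeps us in $\qmixSpace$, leaves every other joint-action value of $\qtot$ unchanged, and reduces the squared error at $\approxU$ from the strictly positive quantity $w(s,\approxU)(Q(s,\approxU) - \qtot(s,\approxU))^2$ to zero --- a strict improvement contradicting optimality. When some agent's $\argmax Q_a$ is not a singleton several joint actions share the profile $\hat{p}$ and the single scalar $f_s(\hat{p})$ ties all their $\qtot$ values together; here I would first break the tie by perturbing one of the tied individual utilities $Q_a(s,\cdot)$ by a small $\epsilon > 0$ so that $\approxU$ occupies an isolated new corner, set $f_s$ at this new corner to $Q(s,\approxU)$, and set $f_s$ at the remaining nearby (shifted) grid points so as to recover the original $\qtot$ values up to $o(1)$ by continuity. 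For $\epsilon$ small enough this preserves global monotonicity and keeps the SSE change at all other joint actions negligible, while the fixed gain $w(s,\approxU)(Q(s,\approxU) - \qtot(s,\approxU))^2$ at $\approxU$ again contradicts optimality.

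For part~(b), assume additionally that $\approxU = \U^* = \argmax Q(s,\cdot)$; by~(a) we have $\qtot(s,\approxU) \geq Q(s,\approxU)$, so only strict inequality $\qtot(s,\approxU) > Q(s,\approxU)$ needs to be ruled out. The clean move is a clipping construction: define $Q'_{tot}(s,\U) := \min\!\bigl(\qtot(s,\U),\, Q(s,\approxU)\bigr)$, which lies in $\qmixSpace$ because the pointwise minimum of a monotonic $f_s$ with a constant is still monotonic in each argument. Since $\approxU$ is the $Q$-argmax, $Q(s,\U) \le Q(s,\approxU)$ for every $\U$, so clipping $\qtot$ at this threshold never moves a prediction away from its target; at $\U = \approxU$ the error strictly drops from $w(s,\approxU)(\qtot(s,\approxU) - Q(s,\approxU))^2 > 0$ to zero, yielding a strictly smaller total weighted SSE and contradicting optimality. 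The main obstacle I expect is the tie-breaking perturbation in part~(a) --- one has to verify that the jointly perturbed $(f_s', Q_a')$ is still a valid element of $\qmixSpace$ (monotonicity on the shifted grid) and that the SSE contributions from joint actions outside the tied cluster change by only $o(1)$ --- but this should reduce to a standard continuity argument once the construction is written out.
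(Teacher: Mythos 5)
Your proof takes essentially the same route as the paper's: both parts are argued by contradiction, by exhibiting a competitor in \qmixSpace~with strictly smaller weighted loss. For the lower bound you overwrite the value at $\approxU$ with $Q(s,\approxU)$; for the upper bound you clip, and your $Q'_{tot}(s,\U)=\min\{\qtot(s,\U),Q(s,\U^*)\}$ is exactly the paper's construction, including the observation that clipping at $Q(s,\U^*)$ never moves any prediction away from its target because $Q(s,\U)\le Q(s,\U^*)$ for all $\U$. Your justification that the clipped table stays in \qmixSpace~(pointwise minimum of a monotonic function with a constant) is in fact more explicit than the paper's bare ``by construction.''

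The place you diverge is the tie-handling in part (a), which the paper omits entirely. One caution: the degenerate case your perturbation addresses --- several joint actions collapsing onto the same corner grid point because some $\argmax_u Q_a(s,u)$ is not a singleton --- is not the dangerous one. The proposition quantifies over \emph{every} $\approxU\in\argmax \qtot(s,\cdot)$, and such an $\approxU$ need not be the corner action of the representation at all: $f_s$ may attain its maximum also at a grid point strictly dominated by the corner (e.g.\ a table with values $20/3$ at three cells and $0$ at the fourth, which is a legitimate element of \qmixSpace). For such a non-corner $\approxU$, raising $f_s$ only at $\approxU$'s grid point breaks monotonicity, and no small perturbation of the utilities can promote a dominated profile to the corner, so your patch does not apply. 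The paper's proof has exactly the same unaddressed case (its modified $Q'_{tot}$ need not lie in \qmixSpace~when the argmax of $\qtot$ is not a singleton), so your attempt is no weaker than the published argument; but if you want a fully airtight statement you should either restrict $\approxU$ to the corner maximiser --- which is what your ``key structural observation'' actually delivers --- or add a hypothesis that $\argmax\qtot(s,\cdot)$ is a singleton.
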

\vspace{-0.3cm}
\begin{proof}
Consider a $s \in S$.
Assume for a contradiction that $Q_{tot}(s, \approxU) < Q(s, \approxU)$.

Define $Q'_{tot}$ as follows:
\begin{equation*}
    Q'_{tot}(s, \U) = 
    \begin{cases}
    Q(s,\U) & \U = \approxU \\
    Q_{tot}(s,\U) & \text{otherwise},
    \end{cases}
\end{equation*}

By construction we have that $Q'_{tot} \in \qmixSpace$, and

\begin{align*}
&\sum_{\U \in \mathbf{U}} w(s,\U)(Q(s,\U) - Q'_{tot}(s,\mathbf{u}))^2\\ 
&= \sum_{\U \neq \approxU} w(s,\U)(Q(s,\U) - Q'_{tot}(s,\mathbf{u}))^2
 + w(s,\approxU)(Q(s,\approxU) - Q'_{tot}(s,\approxU))^2\\
 &= \sum_{\U \neq \approxU} w(s,\U)(Q(s,\U) - Q'_{tot}(s,\mathbf{u}))^2\\ 
 & \qquad (Q'_{tot}(s,\approxU) = Q(s, \approxU))\\
 &= \sum_{\U \neq \approxU} w(s,\U)(Q(s,\U) - Q_{tot}(s,\mathbf{u}))^2\\
 & \qquad (Q'_{tot}(s,\U) = Q_{tot}(s, \U)~\forall \U \neq \approxU)\\
 &< \sum_{\U \neq \approxU} w(s,\U)(Q(s,\U) - Q_{tot}(s,\mathbf{u}))^2 + w(s,\approxU)(Q(s,\approxU) - Q_{tot}(s,\approxU))^2\\
 & \qquad (Q_{tot}(s,\approxU) < Q(s,\approxU)) \\
 &= \sum_{\U \in \mathbf{U}} w(s,\U)(Q(s,\U) - Q_{tot}(s,\mathbf{u}))^2.
\end{align*}

Thus $Q_{tot}$ cannot be the solution of \wProj $Q$, a contradiction.
And so $Q_{tot}(s, \approxU) \geq Q(s, \approxU)$. 

Now consider the scenario in which $\approxU = \U^*$, and
assume for a contradiction that $Q_{tot}(s, \approxU) > Q(s, \approxU)$.

Define $Q'_{tot}$ as follows:
\begin{equation*}
    Q'_{tot}(s, \U) = 
    \begin{cases}
    Q(s,\U) & \U = \approxU = \U^* \\
    \min\{Q_{tot}(s,\U), Q(s,\U^*)\} & \text{otherwise},
    \end{cases}
\end{equation*}

Again, by construction $Q'_{tot} \in \qmixSpace$. 

\begin{align*}
&\sum_{\U \in \mathbf{U}} w(s,\U)(Q(s,\U) - Q'_{tot}(s,\mathbf{u}))^2\\
&= \sum_{\U \neq \approxU} w(s,\U)(Q(s,\U) - Q'_{tot}(s,\mathbf{u}))^2
 + w(s,\approxU)(Q(s,\approxU) - Q'_{tot}(s,\approxU))^2\\
 &= \sum_{\U \neq \approxU} w(s,\U)(Q(s,\U) - Q'_{tot}(s,\mathbf{u}))^2 \\
 & \qquad (Q'_{tot}(s,\approxU) = Q(s, \approxU))\\
 &\leq \sum_{\U \neq \approxU} w(s,\U)(Q(s,\U) - Q_{tot}(s,\mathbf{u}))^2 \\
 & \qquad (\text{If $\min\{Q_{tot}(s,\U), Q(s,\approxU)\}=Q_{tot}(s,\U)$ then $Q'_{tot}(s,\U)=Q_{tot}(s,\U)$.}\\
 & \qquad \text{Otherwise $Q(s,\U^*) < Q_{tot}(s,\U) \implies (Q(s,\U^*) - Q(s, \U))^2 < (Q_{tot}(s,\U) - Q(s, \U))^2$},\\
 & \qquad \text{since $Q(s,\U) \leq Q(s,\U^*)$.})\\
 &< \sum_{\U \neq \approxU} w(s,\U)(Q(s,\U) - Q_{tot}(s,\mathbf{u}))^2 + w(s,\approxU)(Q(s,\approxU) - Q_{tot}(s,\approxU))^2 \\
 & \qquad (Q_{tot}(s,\approxU) > Q(s,\approxU)) \\
 &= \sum_{\U \in \mathbf{U}} w(s,\U)(Q(s,\U) - Q_{tot}(s,\mathbf{u}))^2.
\end{align*}

Thus, $Q_{tot}$ cannot be the solution of \wProj $Q$, a contradiction. 
This proves that $Q_{tot}(s, \approxU) = Q(s, \approxU)$ if $\approxU = \U^*$.
\end{proof}

\begin{proposition}
\label{prop:exists_argmax}
Let $Q_{tot} = \wProj Q$.
$\forall s \in S$
$\exists \approxU \in \argmax Q_{tot}(s,\cdot)$ such that $Q_{tot}(s,\approxU) = Q(s,\approxU)$.
\end{proposition}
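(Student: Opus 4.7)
The plan is to proceed by contradiction, following the style of Proposition \ref{prop:wargmax}. Fix $s \in S$, let $A^* := \argmax Q_{tot}(s, \cdot)$, and let $M := Q_{tot}(s, \approxU)$ for any $\approxU \in A^*$. Proposition \ref{prop:wargmax} already yields $Q_{tot}(s, \approxU) \geq Q(s, \approxU)$ for every $\approxU \in A^*$, so the only way the claim can fail is if $Q_{tot}(s, \approxU) > Q(s, \approxU)$ strictly for \emph{every} $\approxU \in A^*$. Assuming this, the aim is to construct a perturbation $Q'_{tot} \in \qmixSpace$ that attains a strictly smaller weighted squared loss than $Q_{tot}$, contradicting $Q_{tot} = \wProj Q$.

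The natural perturbation is to lower the joint value on the entire top plateau. Pick $\epsilon > 0$ and define $Q'_{tot}(s, \U) = M - \epsilon$ for $\U \in A^*$ and $Q'_{tot}(s, \U) = Q_{tot}(s, \U)$ otherwise. Let $m := \max_{\U \notin A^*} Q_{tot}(s, \U)$, with the convention $m = -\infty$ if $A^* = \mathbf{U}$. To show $Q'_{tot} \in \qmixSpace$, I would keep the same per-agent utilities $Q_a$ that witness $Q_{tot} \in \qmixSpace$ and replace the mixing function by $f_s'(x_1,\ldots,x_n) := \min\{f_s(x_1,\ldots,x_n),\, M - \epsilon\}$. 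The minimum of two non-decreasing functions is non-decreasing, so $f_s'$ still has non-negative partial derivatives; and as long as $\epsilon < M - m$, composing $f_s'$ with the $Q_a$ recovers precisely the pointwise definition of $Q'_{tot}$ above.

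The remaining step is a routine loss computation. Since $Q_{tot}$ and $Q'_{tot}$ agree outside $A^*$, the loss difference reduces to
\[
\sum_{\approxU \in A^*} w(s,\approxU)\, \epsilon\, \bigl(2(Q(s,\approxU) - M) + \epsilon\bigr).
\]
Under the contradiction hypothesis each factor $Q(s,\approxU) - M$ is strictly negative and uniformly bounded away from zero over the finite set $A^*$, so for all sufficiently small $\epsilon > 0$ the bracket is negative and the sum is strictly negative, contradicting that $Q_{tot}$ minimises the weighted squared loss.

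The main obstacle is the membership step $Q'_{tot} \in \qmixSpace$: the monotonic mixing structure couples values at different joint actions, so one cannot naively lower a single $Q_a(s, u_a)$ without affecting many joint entries simultaneously. The $\min\{\cdot,\, M - \epsilon\}$ reparametrisation of $f_s$ is the cleanest way to sidestep this, because it only clips the maximum level set of $f_s$ and, thanks to the strictly positive gap $M - m > 0$, leaves every non-argmax joint value untouched. Once this construction is justified, the rest is essentially the same algebraic manipulation used in the second half of the proof of Proposition \ref{prop:wargmax}.
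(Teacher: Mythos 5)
Your proposal is correct and follows essentially the same route as the paper: assume every argmax action is strictly overestimated, lower the top plateau of $Q_{tot}$ by a small $\epsilon$, and contradict optimality of the weighted loss. In fact you are somewhat more careful than the paper, which asserts $Q'_{tot} \in \qmixSpace$ "by construction" without justification, whereas your $\min\{f_s,\, M-\epsilon\}$ clipping of the mixing function makes that membership step explicit and also handles the degenerate case $A^* = \mathbf{U}$ uniformly.
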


\begin{proof}
Assume for a contradiction that $\forall \approxU \in \argmax Q_{tot}$ we have that $Q_{tot}(\approxU) > Q(\approxU)$.

Define $\Delta_s := Q_{tot}(s,\approxU) - \max \{Q_{tot}(s,\U) | \U \in \mathbf{U}, Q_{tot}(s,\U) < Q_{tot}(s,\approxU)\}$
to be the difference between the maximum $Q$-Value and the next biggest $Q$-Value (the action gap \citep{bellemare2016increasing}).
$\Delta_s$ is well defined as long as there exists a sub-optimal action.
If there is not a suboptimal action, then trivially any $\U \in \mathbf{U}$ satisfies the condition.

Let $\epsilon = \min\{\Delta_s / 2, (Q_{tot}(s,\approxU) - \max\{Q(s,\U)|\U \in \argmax Q_{tot}(s,\cdot)\})/2\} > 0$.

Define $Q'_{tot}$ as follows:
\begin{equation*}
    Q'_{tot}(s, \U) = 
    \begin{cases}
    Q_{tot}(s, \U) - \epsilon & \U \in \argmax Q_{tot} \\
    Q_{tot}(s, \U) & \text{otherwise}.
    \end{cases}
\end{equation*}
i.e. we have decreased the $Q$-Value estimates for the argmax joint actions by a small non-zero amount. Since $\epsilon < \Delta_s$ we do not need to worry about adjusting other action's estimates.

By construction $Q'_{tot} \in \qmixSpace$. 

Then $Q'_{tot}$ has a smaller loss than $Q_{tot}$ since the estimates for the argmax actions are closer to the true values. 

This gives our contradiction since $Q_{tot} \in \wProj Q$.

Thus $\exists \approxU \in \argmax Q_{tot}$ such that $Q_{tot}(\approxU) \leq Q(\approxU)$.
Combined with Proposition \ref{prop:wargmax} gives us the required result.
\end{proof}

\begin{corollary}
\label{cor:unique_argmax_match}
If $Q_{tot}$ has a unique argmax \approxU, then $Q_{tot}(\approxU) = Q(\approxU)$.
\end{corollary}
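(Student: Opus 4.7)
The plan is to derive this corollary as an immediate consequence of Proposition \ref{prop:exists_argmax}, using the uniqueness hypothesis to pin down the element whose existence is asserted there. Fix an arbitrary state $s \in S$. Proposition \ref{prop:exists_argmax} guarantees the existence of some $\approxU' \in \argmax Q_{tot}(s,\cdot)$ for which $Q_{tot}(s,\approxU') = Q(s,\approxU')$. Under the corollary's hypothesis the set $\argmax Q_{tot}(s,\cdot)$ is the singleton $\{\approxU\}$, so the only candidate witness is $\approxU$ itself; hence $\approxU' = \approxU$ and the desired equality follows.

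I do not anticipate any obstacle: there is no combinatorial case analysis, no inequality chain to optimise, and no appeal to the specific form of the weighting $w$ beyond what Proposition \ref{prop:exists_argmax} already encodes. The only thing worth being careful about is not to invoke Proposition \ref{prop:wargmax} in the wrong direction, since that proposition only gives the inequality $Q_{tot}(s,\approxU) \geq Q(s,\approxU)$ for a generic argmax element, which by itself is not enough to conclude equality. The corollary's strength comes precisely from combining the existential statement of Proposition \ref{prop:exists_argmax} with the set-theoretic fact that a one-element set forces any asserted element of it to be the named one.
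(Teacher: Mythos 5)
Your proposal is correct and matches the paper's own proof: both derive the corollary directly from Proposition \ref{prop:exists_argmax}, noting that the uniqueness of the argmax forces the witness guaranteed there to be $\approxU$ itself. Your additional caution about not misusing the one-sided inequality from Proposition \ref{prop:wargmax} is sensible but does not change the argument.
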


\begin{proof}
Proposition \ref{prop:exists_argmax} showed the existence of an argmax action whose $Q_{tot}$-value matches $Q$ exactly. If there is a unique argmax \approxU, then it must match exactly giving our result.
\end{proof} 

\addtocounter{theorem}{-1}
\addtocounter{theorem}{-1}
\begin{theorem}
Let $w$ be the Idealised Central Weighting from Equation \eqref{eq:critic_w}.
Then $\exists \alpha>0$ such that, $\argmax \Pi_w Q = \argmax Q$ for any $Q$.
\end{theorem}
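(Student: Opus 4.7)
The plan is to argue by contradiction. The projection \wProj~is defined per state (its objective sums only over joint actions), so it suffices to fix an arbitrary state $s$ and show that any $Q_{tot} \in \wProj Q$ satisfies $\argmax Q_{tot}(s,\cdot) = \argmax Q(s,\cdot)$. Let $\U^* = \argmax_\U Q(s,\U)$, taken to be unique in the generic case, and suppose for contradiction that $\U^* \notin \argmax Q_{tot}(s,\cdot)$.

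Under this assumption, Proposition \ref{prop:exists_argmax} yields some $\approxU \in \argmax Q_{tot}(s,\cdot)$ with $Q_{tot}(s,\approxU) = Q(s,\approxU)$, and by hypothesis $\approxU \neq \U^*$. Set $\delta := Q(s,\U^*) - Q(s,\approxU) > 0$. Because $\approxU$ attains the maximum of $Q_{tot}(s,\cdot)$, we obtain $Q_{tot}(s,\U^*) \leq Q_{tot}(s,\approxU) = Q(s,\U^*) - \delta$, and hence the contribution of the joint action $\U^*$ alone to the weighted squared loss is at least $\delta^2$, since the weight at $\U^*$ equals $1$.

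Next I compare $Q_{tot}$ to the constant candidate $Q'_{tot} \equiv Q(s,\U^*)$, which lies in \qmixSpace~since any constant mixing function is trivially monotonic. Its weighted loss at $s$ equals $\alpha \sum_{\U \neq \U^*} (Q(s,\U^*) - Q(s,\U))^2$, with zero contribution from $\U^*$. Choosing $\alpha < \delta^2 / \sum_{\U \neq \U^*} (Q(s,\U^*) - Q(s,\U))^2$, a strictly positive bound determined by $Q$ alone (the denominator cannot vanish once $\U^*$ is the unique maximiser), makes the loss of $Q'_{tot}$ strictly smaller than that of $Q_{tot}$, contradicting $Q_{tot} \in \wProj Q$ and settling the theorem away from ties.

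The main obstacle is the tie case, in which $\U^* \in \argmax Q_{tot}$ but some additional $\tilde{\U} \neq \U^*$ also achieves the maximum. Proposition \ref{prop:wargmax} then forces $Q_{tot}(s,\U^*) = Q(s,\U^*)$, and hence $Q_{tot}(s,\tilde{\U}) = Q(s,\U^*) > Q(s,\tilde{\U})$, strictly overestimating the true value at $\tilde{\U}$. To rule this out I would adapt the perturbation appearing in the proof of Proposition \ref{prop:exists_argmax}: slightly decrease $Q_{tot}$ at the extraneous argmax actions while preserving its value at $\U^*$, arguing the perturbation remains in \qmixSpace~and strictly decreases the weighted loss. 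Verifying that such a local decrease stays within the monotonic class is the delicate technical point; a secondary remark is that the $\alpha$ my construction yields depends on $Q$, which is the appropriate reading of the theorem's existential quantifier and suffices for the downstream use in Corollary \ref{cor:fixed_point}, where bounded $Q^*$ ensures a uniform choice exists.
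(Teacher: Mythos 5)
Your main case runs along essentially the same lines as the paper's proof: use Proposition \ref{prop:exists_argmax} to lower-bound the weighted loss of any $Q_{tot}$ whose argmax misses $\U^*$ by the square of the gap at $\U^*$ (which carries weight $1$), exhibit a cheap witness with the correct behaviour whose loss is $\alpha$ times a bounded sum (the paper bounds that sum uniformly by $(\frac{R_{max}}{1-\gamma})^2|U|^n$ where you use the explicit sum $\sum_{\U\neq\U^*}(Q(s,\U^*)-Q(s,\U))^2$ of a constant candidate, which is a legitimate member of \qmixSpace), and shrink $\alpha$ per state before taking a minimum over states. One repair is needed: your threshold $\delta^2/\sum_{\U\neq\U^*}(Q(s,\U^*)-Q(s,\U))^2$ is circular as written, because $\delta$ is defined from the $\approxU$ supplied by Proposition \ref{prop:exists_argmax}, which depends on $Q_{tot}=\wProj Q$, which depends on $\alpha$. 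The fix is exactly the paper's: since $\U^*$ uniquely maximises $Q(s,\cdot)$ and $\approxU\neq\U^*$, you have $\delta\geq\Delta_s$, the action gap of $Q$ itself at $s$, so define $\alpha_s$ from $\Delta_s$ (a function of $Q$ alone) and the contradiction goes through unchanged.

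The substantive gap is the tie case, which your argument only flags. What you prove is $\U^*\in\argmax Q_{tot}(s,\cdot)$; the theorem asserts set equality, and an extraneous $\tilde{\mathbf{u}}\in\argmax Q_{tot}(s,\cdot)$ cannot be excluded by shrinking $\alpha$, since (as you note via Proposition \ref{prop:wargmax}) its loss contribution $\alpha(Q(s,\U^*)-Q(s,\tilde{\mathbf{u}}))^2$ is itself $O(\alpha)$ and so is indistinguishable in order from the witness's loss. Your proposed repair, a local decrease of $Q_{tot}$ at the extraneous argmax actions only, faces a concrete obstacle: membership in \qmixSpace~means monotonicity with respect to the product order induced by the per-agent utilities $Q_a(s,\cdot)$, so if $Q_a(s,\tilde{u}_a)\geq Q_a(s,u^*_a)$ for every agent $a$, then every monotone mixing of those same utilities satisfies $Q_{tot}(s,\tilde{\mathbf{u}})\geq Q_{tot}(s,\U^*)$, and no perturbation of $f_s$ alone that lowers $\tilde{\mathbf{u}}$ while preserving $\U^*$ stays in the class --- you must also perturb the $Q_a$'s, which moves other joint actions and requires a separate accounting of the loss. (In fairness, the paper's own proof also elides this case: its strict inequality $Q_{tot}(s,\approxU')>Q_{tot}(s,\U^*)$ fails precisely when $\U^*$ remains in $\argmax Q_{tot}$ alongside $\approxU$.) Until that perturbation is carried out within \qmixSpace, your argument establishes $\argmax Q\subseteq\argmax\wProj Q$ but not the converse inclusion.
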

\begin{theorem}
Let $w$ be the \wname Weighting from Equation \eqref{eq:hyst_w}.
Then $\exists \alpha>0$ such that, $\argmax \Pi_w Q = \argmax Q$ for any $Q$.
\end{theorem}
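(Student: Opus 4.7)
The plan is to argue by contradiction, following the blueprint of Theorem~\ref{th:critic_w} but patching up the obstacle introduced by the self-referential weighting. Let $Q_{tot} = \wProj Q$ and, via Proposition~\ref{prop:exists_argmax}, choose $\hat{\U} \in \argmax Q_{tot}(s,\cdot)$ with $Q_{tot}(s,\hat{\U}) = Q(s,\hat{\U})$. Write $\U^* = \argmax_\U Q(s,\U)$ (taken unique for the main argument; ties are handled by running the same argument on each element of $\argmax Q$) and let $\delta := Q(s,\U^*) - \max_{\U \neq \U^*} Q(s,\U) > 0$ be the action gap. Suppose for contradiction that $\hat{\U} \neq \U^*$; then $Q_{tot}(s,\U^*) \leq Q_{tot}(s,\hat{\U}) = Q(s,\hat{\U}) \leq Q(s,\U^*) - \delta$, so in particular $Q_{tot}(s,\U^*) < Q(s,\U^*)$, and by \eqref{eq:hyst_w} the weighting at $\U^*$ is $w_0(s,\U^*) = 1$ (with $w_0 := w(Q_{tot})$); the loss therefore satisfies $L(Q_{tot}; w_0) \geq \delta^2$.

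To close the contradiction I need to exhibit a monotonic $q$ with smaller loss under the same weighting $w_0$. The naive comparison to the constant $q_0 \equiv Q(s,\U^*)$ (which sufficed for Theorem~\ref{th:critic_w}) no longer collapses to $O(\alpha)$, because actions in $A := \{\U : Q_{tot}(s,\U) < Q(s,\U)\}$ still carry weight $1$ under $w_0$. Instead I would exploit a direction that is always interior in $\qmixSpace$: adding a constant $\eta$ to the mixing function keeps $Q_{tot}+\eta \in \qmixSpace$, and $L(Q_{tot}+\eta; w_0)$ is a strictly convex quadratic in $\eta$. Minimality at $\eta=0$ then yields the scalar balance condition $\sum_\U w_0(s,\U)(Q(s,\U) - Q_{tot}(s,\U)) = 0$, i.e., with $S_A := \sum_{\U \in A}(Q-Q_{tot}) \geq \delta$ (since $\U^* \in A$) and $S_B := \sum_{\U \in A^c}(Q_{tot}-Q) \geq 0$, one obtains $S_A = \alpha S_B$. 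Cauchy--Schwarz on $S_B$ gives $L(Q_{tot};w_0) \geq \alpha S_B^2 / |A^c| = S_A^2/(\alpha |A^c|)$, while the trivial bound $L(Q_{tot};w_0) \leq L(q_0;w_0) \leq \sum_{\U \neq \U^*}(Q(s,\U^*)-Q(s,\U))^2 =: C$ still holds because $w_0 \leq 1$. Chaining these forces $\delta^2 \leq S_A^2 \leq \alpha |\mathbf{U}| C$, which fails for any $\alpha < \delta^2/(|\mathbf{U}| C)$, producing the required contradiction.

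The main obstacle is the self-referential nature of the optimistic weighting: one cannot symbolically swap $Q_{tot}$ inside $w(\cdot)$ for an arbitrary competitor $q$, so the clean loss comparison used in Theorem~\ref{th:critic_w} is blocked. The key trick is that one specific perturbation---a uniform additive shift of the mixing function---lies in $\qmixSpace$ regardless of monotonicity constraints, so the first-order condition along it is always available. That single scalar equation $S_A = \alpha S_B$, combined with Cauchy--Schwarz and the $O(1)$ loss upper bound from the constant competitor, converts the minimality of $Q_{tot}$ into an $\alpha$-dependent upper bound on $S_A$ that crashes below the fixed lower bound $\delta$ for sufficiently small $\alpha$.
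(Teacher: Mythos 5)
Your proof is correct in substance, but it takes a genuinely different route from the paper's. The paper argues by exhibiting an explicit competitor in \qmixSpace~--- the function equal to $Q(s,\U^*)$ at $\U^*$ and to $Q(s,\U^*)-\epsilon$ elsewhere --- for which the induced optimistic weighting is known by construction (every suboptimal action is overestimated, hence weighted $\alpha$), so its loss is $O(\alpha)$; any candidate whose argmax misses $\U^*$ is then shown, via Proposition~\ref{prop:exists_argmax}, to incur loss at least $\Delta_s^2$, and comparing the two finishes the argument. You instead extract a necessary \emph{stationarity} condition from optimality: perturbing the mixing function by a constant stays in \qmixSpace, the self-referential objective is differentiable along that direction at the optimum (the terms on the equality set $\{Q_{tot}=Q\}$ are second order in the shift, so the weight flips there do not affect the first-order term), and the resulting balance $S_A=\alpha S_B$ forces the overestimation mass $S_B$ to blow up like $\delta/\alpha$, which via Cauchy--Schwarz contradicts the $O(1)$ loss upper bound from the constant competitor. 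Your route handles the self-referentiality of the weighting more structurally --- the balance identity is a nice reusable fact about \wProj~--- whereas the paper's sidesteps it by picking a competitor whose weighting is transparent; the paper's threshold $\alpha_s<\Delta_s^2(1-\gamma)^2/(R_{max}^2|U|^n)$ is also slightly more uniform than your $\delta^2/(|\mathbf{U}|C)$, which depends on the particular $Q$-values through $C$ (both, like the paper, really prove ``for each $Q$ there exists $\alpha$,'' and both should take $\alpha=\min_s\alpha_s$ over states at the end). Two small points to tighten: with ties in $\argmax Q$ your $\delta$ as written is zero, so you should define it as the gap to the next \emph{distinct} value (as the paper does for $\Delta_s$) and conclude $\approxU\in\argmax Q$; and the first-order condition should be stated for the objective $q\mapsto L(q;w_q)$ rather than the frozen $L(q;w_0)$ --- though, as noted, the one-sided inequality $S_A\le\alpha S_B$ already suffices for your chain of bounds, so nothing breaks.
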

\begin{proof}

Since the proof of both theorems contains a significant overlap, we will merge them both into a single proof.

We will start by first considering the Idealised Central Weighting: 
Let $Q_{tot} = \Pi_w Q$ ($Q_{tot} \in \Pi_w Q$ if there are distinct solutions).

Let $\U^* \in \argmax Q$, be an optimal action.

Consider a state $s \in S$.

Define $\Delta_s := Q(s,\U^*) - \max \{Q(s,\U) | \U \in \mathbf{U}, Q(s,\U) < Q(s,\U^*)\}$
to be the difference between the maximum $Q$-Value and the next biggest $Q$-Value (the action gap \citep{bellemare2016increasing}).
$\Delta_s$ is well defined as long as there exists a sub-optimal action.
If there is not a suboptimal action, then trivially $\argmax \wProj Q = \argmax Q$ for state $s$. 

Let $\approxU \in \argmax \wProj Q$, and consider the loss when $\mathbf{\hat{u}} = \mathbf{u}^*$.

By Propositon \ref{prop:wargmax} we have that $Q_{tot}(s,\approxU) = Q(s,\approxU)$.

Then the loss:
$$
\sum_{\mathbf{u} \in \mathbf{U}} w(s,\mathbf{u})(Q(s,\mathbf{u}) - Q_{tot}(s,\mathbf{u}))^2 = 
\alpha \sum_{\mathbf{u} \neq \mathbf{u}^*} (Q(s,\mathbf{u}) - Q_{tot}(s,\mathbf{u}))^2 < \alpha (\frac{R_{max}}{1-\gamma})^2 |U|^n,
$$
where $R_{max} := \max r - \min r$.
The last inequaility follows since the maximum difference between $Q$-values in the discounted setting is then $\frac{R_{max}}{1-\gamma}$, and there are $|U|^n$ joint-actions total. 

Whereas if $\approxU \neq \U^*$, then the loss
\begin{align*}
&\sum_{\mathbf{u} \in \mathbf{U}} w(s,\mathbf{u})(Q(s,\mathbf{u}) - Q_{tot}(s,\mathbf{u}))^2\\
&= (Q(s,\mathbf{u}^*) - Q_{tot}(s, \mathbf{u}^*))^2 + 
~\alpha \sum_{\mathbf{u} \neq \mathbf{u}^*} (Q(s,\mathbf{u}) - Q_{tot}(s,\mathbf{u}))^2 \\
&\geq \Delta_s^2,
\end{align*}
since $Q(s,\mathbf{u}^*) - Q_{tot}(s, \mathbf{u}^*) \geq \Delta_s$, which is proved below.

By Proposition \ref{prop:exists_argmax} let $\approxU' \in \argmax Q_{tot}$ such that $Q_{tot}(s,\approxU') = Q(s,\approxU')$.
Then $Q(s,\U^*) \geq \Delta_s + Q(s, \approxU') = \Delta_s + Q_{tot}(s,\approxU') > \Delta_s + Q_{tot}(s, \mathbf{u}^*) \implies Q(s,\mathbf{u}^*) - Q_{tot}(s, \mathbf{u}^*) > \Delta_s$.\\
The strict inequaility $Q_{tot}(s,\approxU') > Q_{tot}(s, \mathbf{u}^*)$ used is due to $\approxU \neq \U^*$.

Setting $0 < \alpha_s < \frac{\Delta_s^2 (1-\gamma)^2}{(R_{max})^2 |U|^n}$ then gives the required result for state $s$.

Letting $\alpha = \min_s \alpha_s > 0$ completes the proof for the Idealised Central Weighting.

For the proof of the \wname Weighting we will use many of the same arguments and notation.

We will once again consider a single state $s \in S$ and the action gap $\Delta_s$.

Now, let us consider a \qtot~of a specific form:
For $s \in S$, let $Q_{tot}(s, \mathbf{\hat{u}}) = c_s + \epsilon,$ where $\epsilon << \Delta_s$ and $Q_{tot}(s, \mathbf{u}) = c_s, \forall \mathbf{u} \neq \mathbf{\hat{u}}$. Note that here \qtot~has a unique maximum action.

For this $Q_{tot}$, consider $\mathbf{\hat{u}} = \mathbf{u}^*$ then the loss:
$$
\sum_{\mathbf{u} \in \mathbf{U}} w(s,\mathbf{u})(Q(s,\mathbf{u}) - Q_{tot}(s,\mathbf{u}))^2 = 
\alpha \sum_{\mathbf{u} \neq \mathbf{u}^*} (Q(s,\mathbf{u}) - Q_{tot}(s,\mathbf{u}))^2 \leq \alpha (\frac{R_{max}}{1-\gamma})^2 |U|^n < \Delta_s^2,
$$
since $Q_{tot}(s,\mathbf{u}) = c_s = Q(s,\mathbf{u}^*) - \epsilon > Q(s, \mathbf{u}), \forall \mathbf{u} \neq \mathbf{u}^*$ by Proposition \ref{prop:wargmax}, which means that $w(s,\mathbf{u} \neq \mathbf{u}^*)=\alpha$. The final inequality follows due to setting $0 < \alpha_s < \frac{\Delta_s^2 (1-\gamma)^2}{(R_{max})^2 |U|^n}$ as earlier.

Now consider any $\qtot' \in \qmixSpace$. 

If for this $\qtot'$, $\approxU \neq \mathbf{u}^*$, then $w(s,\U^*) = 1$ and thus the loss
\begin{align*}
&\sum_{\mathbf{u} \in \mathbf{U}} w(s,\mathbf{u})(Q(s,\mathbf{u}) - Q_{tot}(s,\mathbf{u}))^2 \\
&= (Q(s,\mathbf{u}^*) - Q_{tot}(s, \mathbf{u}^*))^2 + 
\sum_{\mathbf{u} \neq \mathbf{u}^*} w(s,\mathbf{u}) (Q(s,\mathbf{u}) - Q_{tot}(s,\mathbf{u}))^2 \\
&\geq \Delta_s^2.
\end{align*}

By Proposition \ref{prop:exists_argmax} let $\approxU' \in \argmax Q'_{tot}$ such that $Q'_{tot}(s,\approxU') = Q(s,\approxU')$.
Since $Q(s,\mathbf{u}^*) \geq \Delta_s + Q(s, \approxU') = \Delta_s + 
\qtot(s,\approxU') > \Delta_s + Q_{tot}(s, \mathbf{u}^*) \implies Q(s,\mathbf{u}^*) - Q_{tot}(s, \mathbf{u}^*) > \Delta_s$. 

Thus, we have shown that for any $\qtot'$ with $\approxU \neq \mathbf{u}^*$ the loss is greater than the $Q_{tot}$ we first considered with $\mathbf{\hat{u}} = \mathbf{u}^*$.

And so for state $s$, $\argmax \Pi_w Q(s,\cdot) = \mathbf{u}^* = \argmax Q(s,\cdot)$.

Letting $\alpha = \min_s \alpha_s > 0$ once again completes the proof.
\end{proof}

\begin{corollary}
Letting $w$ be the Central or \wname Weighting, then
$\exists \alpha > 0$ such that
the unique fixed point of \wOper~ is $Q^*$.
Furthermore, $\wProj Q^* \subseteq \qmixSpace$ recovers an optimal policy, and
$\max \wProj Q^* = \max Q^*$.
\end{corollary}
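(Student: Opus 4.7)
The plan is to leverage Theorems~\ref{th:critic_w} and \ref{th:hyst_w} as a black box: once we know that the weighted projection preserves the argmax of any $Q$, the operator $\wOper$ collapses to the ordinary Bellman optimality operator $\optimOper$ whenever the input is coupled to its own projection, and the standard contraction argument for $\optimOper$ does the rest. First I would fix $\alpha>0$ small enough for the conclusions of \emph{both} theorems to hold simultaneously (take the minimum of the two $\alpha$'s).

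For existence of the fixed point, substitute $\qCentral = Q^*$ into the definition of $\wOper$. The associated $\qtot$ is $\wProj Q^*$, and by Theorems~\ref{th:critic_w}/\ref{th:hyst_w} we have $\argmax_{\U'} \qtot(s',\U') \in \argmax_{\U'} Q^*(s',\U')$ for every $s'$. Hence
\begin{equation*}
\wOper Q^*(s,\U) = \mathbb{E}[r + \gamma Q^*(s',\argmax_{\U'} Q^*(s',\U'))] = \mathbb{E}[r + \gamma \max_{\U'} Q^*(s',\U')] = \optimOper Q^*(s,\U) = Q^*(s,\U).
\end{equation*}

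For uniqueness, let $\hat Q$ be any fixed point of $\wOper$. Apply Theorems~\ref{th:critic_w}/\ref{th:hyst_w} to $\hat Q$ itself: the associated $\qtot=\wProj \hat Q$ satisfies $\argmax_{\U'} \qtot(s',\U') \in \argmax_{\U'} \hat Q(s',\U')$, so
\begin{equation*}
\hat Q(s,\U) = \wOper \hat Q(s,\U) = \mathbb{E}[r+\gamma \max_{\U'} \hat Q(s',\U')] = \optimOper \hat Q(s,\U).
\end{equation*}
Thus every fixed point of $\wOper$ is a fixed point of the ordinary Bellman optimality operator $\optimOper$, which is a $\gamma$-contraction in sup-norm on a complete space and so has $Q^*$ as its unique fixed point.

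For the remaining two claims about $\wProj Q^*$, Theorems~\ref{th:critic_w}/\ref{th:hyst_w} give $\argmax \wProj Q^*(s,\cdot) \subseteq \argmax Q^*(s,\cdot)$, so the greedy joint policy derived from any element of $\wProj Q^*$ is optimal; since this greedy policy decentralises through the per-agent utilities underlying $\qmixSpace$, we recover an optimal (decentralisable) policy. Finally, writing $\U^* \in \argmax \wProj Q^*(s,\cdot) = \argmax Q^*(s,\cdot)$ and applying Proposition~\ref{prop:wargmax} (the equality case, since $\approxU = \U^*$) yields $\wProj Q^*(s,\U^*) = Q^*(s,\U^*)$, hence $\max \wProj Q^*(s,\cdot) = \max Q^*(s,\cdot)$. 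The main subtlety to watch is that $\wProj$ is set-valued, so each statement must be read as holding for every element of $\wProj Q^*$; the argmax-preservation from the theorems is exactly what makes this uniform across the set.
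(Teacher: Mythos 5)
Your proposal is correct and follows essentially the same route as the paper: both arguments hinge on the observation that argmax-preservation (Theorems~\ref{th:critic_w}/\ref{th:hyst_w}) makes $\qCentral(s',\argmax_{\U'}\qtot(s',\U')) = \max_{\U'}\qCentral(s',\U')$, so that $\wOper$ coincides with the ordinary Bellman optimality operator $\optimOper$ and inherits its unique fixed point $Q^*$, after which the optimal-policy and max-value claims follow from the theorems and Proposition~\ref{prop:wargmax} exactly as you say. Your split into a separate existence step (plugging in $Q^*$) and uniqueness step (any fixed point of $\wOper$ is a fixed point of $\optimOper$) is a slightly more explicit packaging of the paper's one-line "the operators are equivalent" argument, but it is not a different proof.
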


\begin{proof}

Using the results of Theorems \ref{th:critic_w} and \ref{th:hyst_w} we know that $\exists \alpha > 0$ such that $\argmax \wProj Q = \argmax Q$.
We also know from their proofs that the same $\alpha$ works for both weightings.

Instead of updating \qtot~in tandem with \qCentral, we can instead write \wOper~ as:
$$
\wOper \qCentral (s,\mathbf{u}) := \mathbb{E} [r + \gamma \qCentral(s',\argmax_{\U'} (\wProj \qCentral)(s',\U'))].
$$

And so:
$$
\qCentral(s',\argmax_{\U'} (\wProj \qCentral)(s',\U')) = \max_{\U'} \qCentral(s',\U'),~\forall s' \in S.
$$

Thus, our operator \wOper~ is equivalent to the usual Bellman Optimality Operator \optimOper, which is known to have a unique fixed point $Q^*$ \citep{watkins1992q,melo2001convergence}.

Once again by the results of Theorems \ref{th:critic_w} and \ref{th:hyst_w}, we know that $Q^*_{tot} \in \wProj Q^*$ acheives the correct argmax for every state.
Thus it is an optimal policy.
Finally, Proposition \ref{prop:wargmax} shows that $\max \wProj Q^* = \max Q^*$.

\end{proof}

\section{Experimental Setup}
\label{sec:setup}

We adopt the same training setup as \citep{samvelyan19smac}, using PyMARL to run all experiments.
The architecture for QMIX is also the same as in \citep{samvelyan19smac}.

The architecture of the mixing network for \qCentral is a feed forward network with 3 hidden layers of $256$ dim and ReLU non-linearities. Shown in Figure \ref{fig:wqmix_targets}.

For the experiment in Figure \ref{graph:corridor_arch} the architecture for \qCentral is modified slightly.
We replace the first hidden layer with a hypernetwork layer. 
A hypernetwork with a single hidden layer of dim $64$ (with ReLU) takes the state as input and generates the weight matrix.
Inspired by \citep{yang2020qatten} we then take the column-wise softmax of this weight matrix, which can be viewed as an approximation of multi-head attention.

The architecture and setup for QTRAN is also the same, except we use a 3 layer feedforward network of dim $\{64, 256\}$ to match the depth of \qCentral.

MADDPG and MASAC's critic shares the same architecture as \qCentral.

MADDPG is trained using the deterministic multi-agent policy-gradient theorem, via the Gumbel-Softmax trick, as in \citep{lowe2017multi,iqbal2019actor}.
Specifically for agent $a$ we produce $Q(s,u_a, u^{-a})$ (with target network $Q$) for each possible action, where $u^{-a}$ are the actions of the other agents produced by their most recent policies.
We multiply these by the agent's policy (one-hot vector since it is deterministic) and use the Straight Through Gumbel-Softmax estimator \citep{jang2016categorical} to differentiate through this with a temperature of $1$.  

MASAC is trained by minimising the KL divergence between each agent's policy $\pi_a$ and $\exp(Q(s,\cdot, u^{-a}) - \alpha_{ent} \log \pi_a)$.,
Since the KL divergence is an expectation: $\mathbb{E}_{\pi_a}[\log(\frac{\pi_a}{\exp(Q(s,\cdot, u^{-a}) - \alpha_{ent} \log \pi_a)}]$, we approximate it by sampling an action from $\pi_a$ for each agent. These sampled actions are used for $u^{-a}$.
For the actor's policies we use the same $\epsilon$-greedy floor technique as in \citep{foerster_counterfactual_2017}.

QPLEX uses the same setup for its mixing network as for the SMAC experiments in \citep{wang2020qplex}.

\subsection{Predator Prey}

For Weighted QMIX variants (and ablations with just a weighting),
we consider $\alpha \in \{0.1, 0.5\}$ and set $\alpha=0.1$ for all variants.

For QTRAN we set $\lambda_{opt}=1$ and consider $\lambda_{nopt} \in \{0.1, 1, 10\}$ (since only their relative weighting makes a difference), and the dim of the mixing network in $\{64, 256\}$.
We set $\lambda_{nopt}=10$ and the dim of the mixing network to $64$.

For MASAC we consider $\alpha_{ent} \in \{0, 0.001, 0.01, 0.1\}$ and set it to $0.001$.

\subsection{SMAC Robustness to exploration}

For Weighted QMIX we consider $\alpha \in \{0.01, 0.1, 0.5, 0.75\}$ and set $\alpha=0.75$ for CW and $\alpha=0.5$ for OW. All lines are available in Appendix \ref{app:results}.

For the Weighted QMIX ablations we considered $\alpha \in \{0.5, 0.75\}$ and set $\alpha=0.75$.

For QTRAN we set $\lambda_{opt}=1$ and consider $\lambda_{nopt} \in \{1, 10\}$ (since these 2 performed best in preliminary experiments), and the dim of the mixing network in $\{64, 256\}$.
We set $\lambda_{nopt}=10$ for \textit{5m\_vs\_6m} and $\lambda{nopt}=1$ for \textit{3s5z}. The dim of the mixing network is set to $64$.

For MASAC we consider $\alpha_{ent} \in \{0, 0.001, 0.01\}$ and set it to $0$ for \textit{3s5z} and $0.01$ for \textit{5m\_vs\_6m}. 

\subsection{SMAC Super Hard Maps}

We consider $\alpha \in \{0.01, 0.1, 0.5, 0.75\}$ and set $\alpha=0.5$ for OW-QMIX and $\alpha=0.75$ for CW-QMIX.

For the experiment in Figure \ref{graph:corridor_arch} we only considered $\alpha \in \{0.5, 0.75\}$ and set $\alpha=0.75$ for both methods.

\section{Ablations}

In order to better understand our method, we examine 3 additional baselines:

\textbf{QMIX + \qCentral.} This ablation removes the weighting in our loss ($w = 1$), but still uses \qCentral to bootstrap. This ablation allows us to see if a better bootstrap estimate alone can explain the performance of WQMIX.

\textbf{QMIX + CW/OW.} This ablation introduces the CW/OW weightings into QMIX's loss function for \qtot. We do not additionally learn \qCentral.

\section{Results}
\label{app:results}

In this section we present the results of additional experiments that did not fit in the main paper.

\textbf{Ablation experiments for Predator Prey.}
\begin{figure}
    \centering
    \includegraphics[width=0.49\columnwidth]{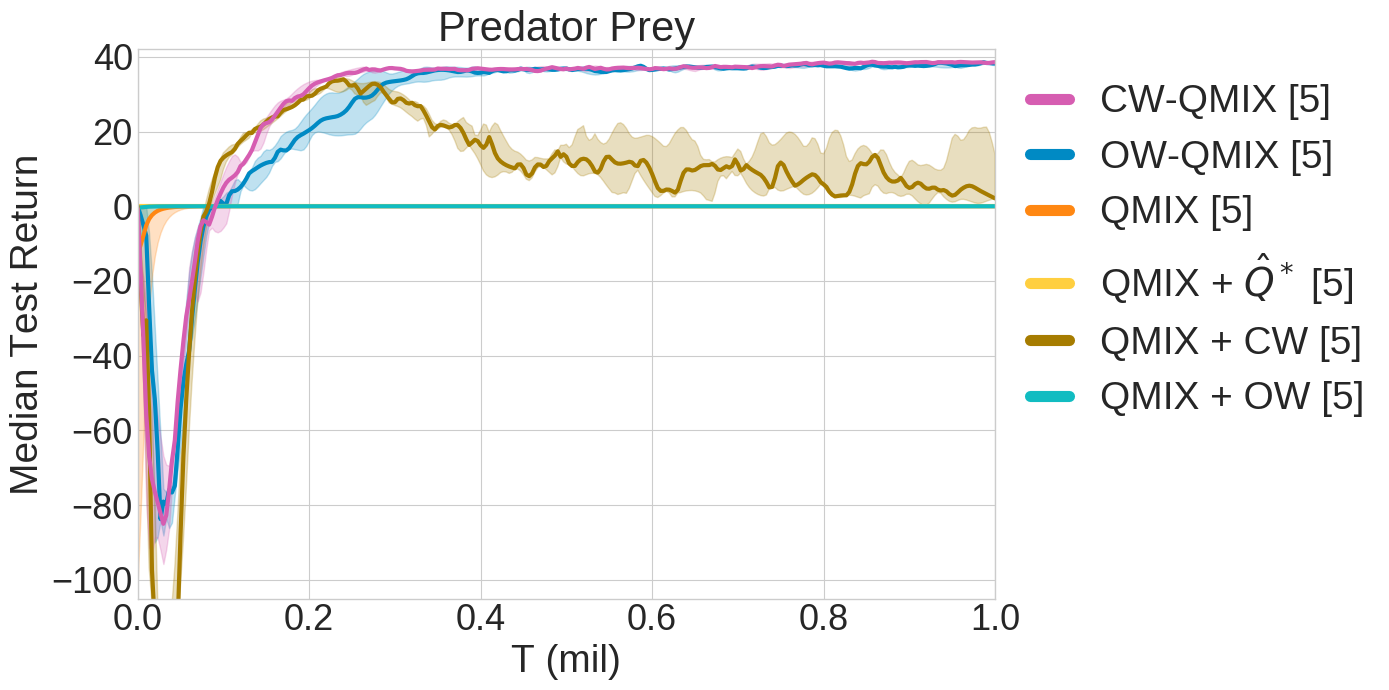}
    \caption{Median test return for the predator prey environment, comparing Weighted QMIX and 3 ablations.}
    \label{graph:pred_prey_abl}
\end{figure}

Our ablation experiments in Figure \ref{graph:pred_prey_abl} show the necessity for both \qCentral and a weighting in order to solve this task.
As expected QMIX $+ \qCentral$ is unable to solve this task due to the challenges of relative overgeneralisation. 
The use of a uniform weighting in the projection prevents the learning of an optimal policy in which two agents coordinate to capture a prey.
Thus, even if \qCentral can theoretically represent the $Q$-values of this optimal policy, the QMIX agents are unable to recover it.
Figure \ref{graph:pred_prey_abl} also shows that QMIX with just a weighting in its projection (and no \qCentral) is unable to successfully solve the task. 

\textbf{Ablation experiments testing robustness to increased exploration.}

\begin{figure*}
    \includegraphics[width=0.49\columnwidth]{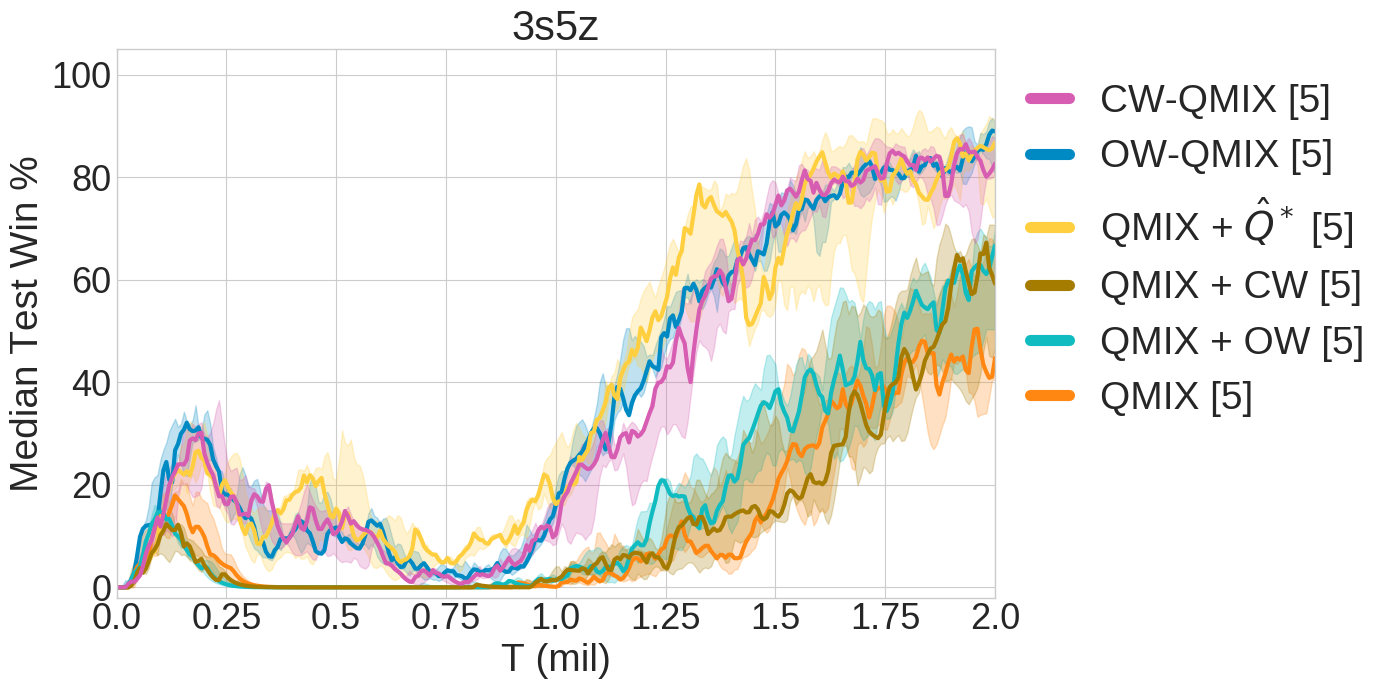}
    \includegraphics[width=0.49\columnwidth]{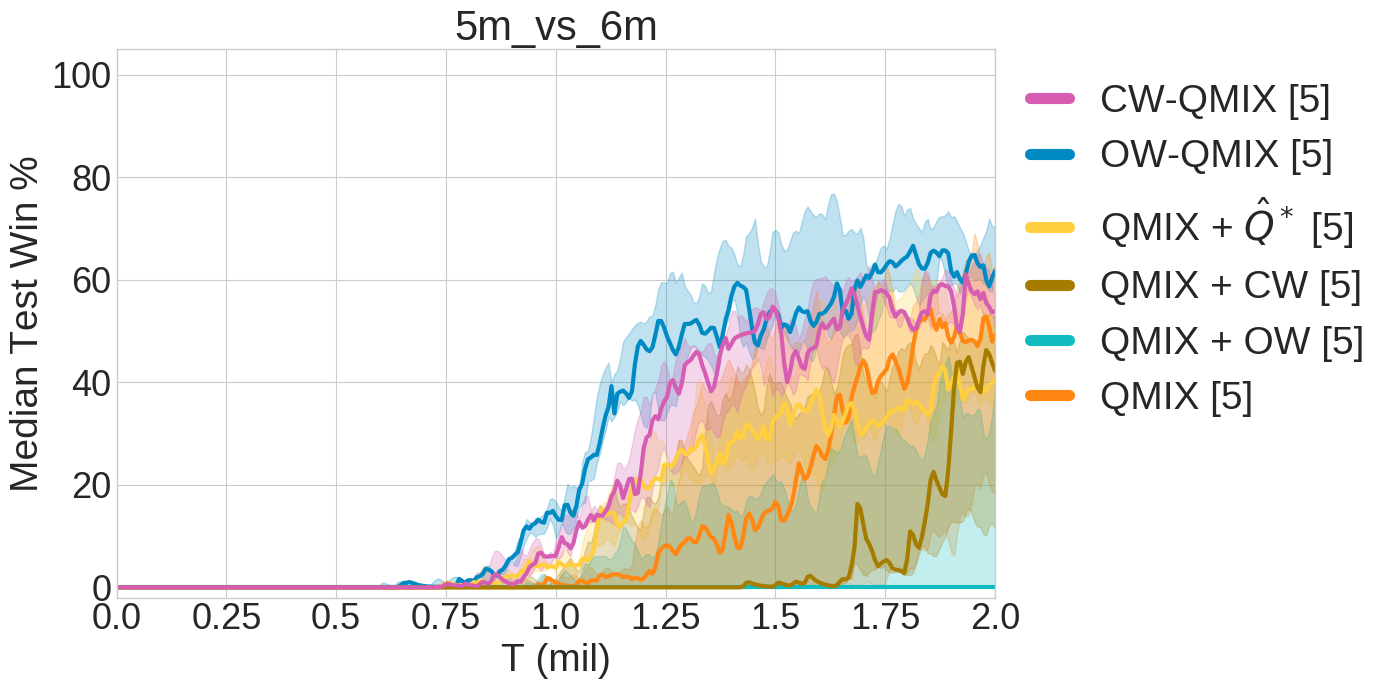}
    \caption{Median test win \% with an increased rate of exploration, comparing Weighted QMIX and 3 ablations.}
    \label{graph:1mil_eps_smac_abl}
\end{figure*}

Figure \ref{graph:1mil_eps_smac_abl} shows the results of further ablation experiments, confirming the need for both \qCentral and a weighting to ensure consistent performance.
Note in particular that combining QMIX with a weighting results in significantly worse performance in \textit{5m\_vs\_6m}, and no better performance in \textit{3s5z}.

\textbf{Effect of $\alpha$ on performance.}

\begin{figure}
    \includegraphics[width=0.49\columnwidth]{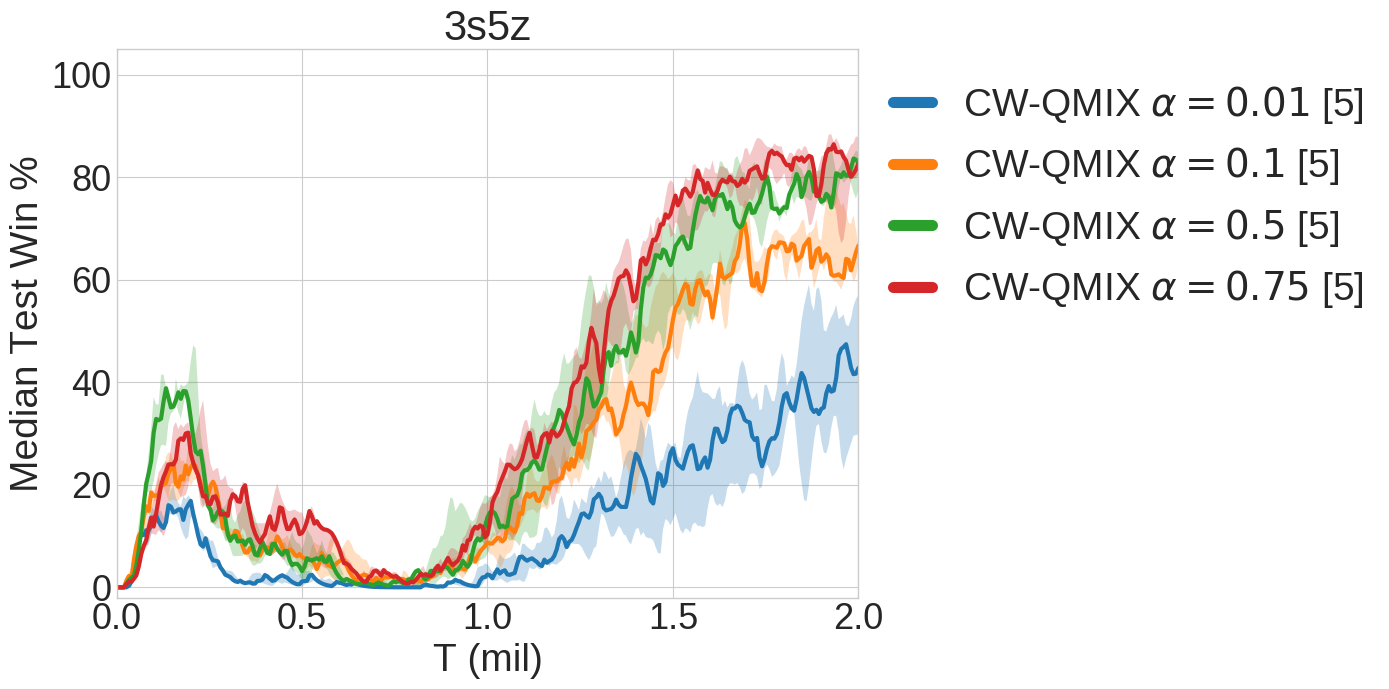}
    \includegraphics[width=0.49\columnwidth]{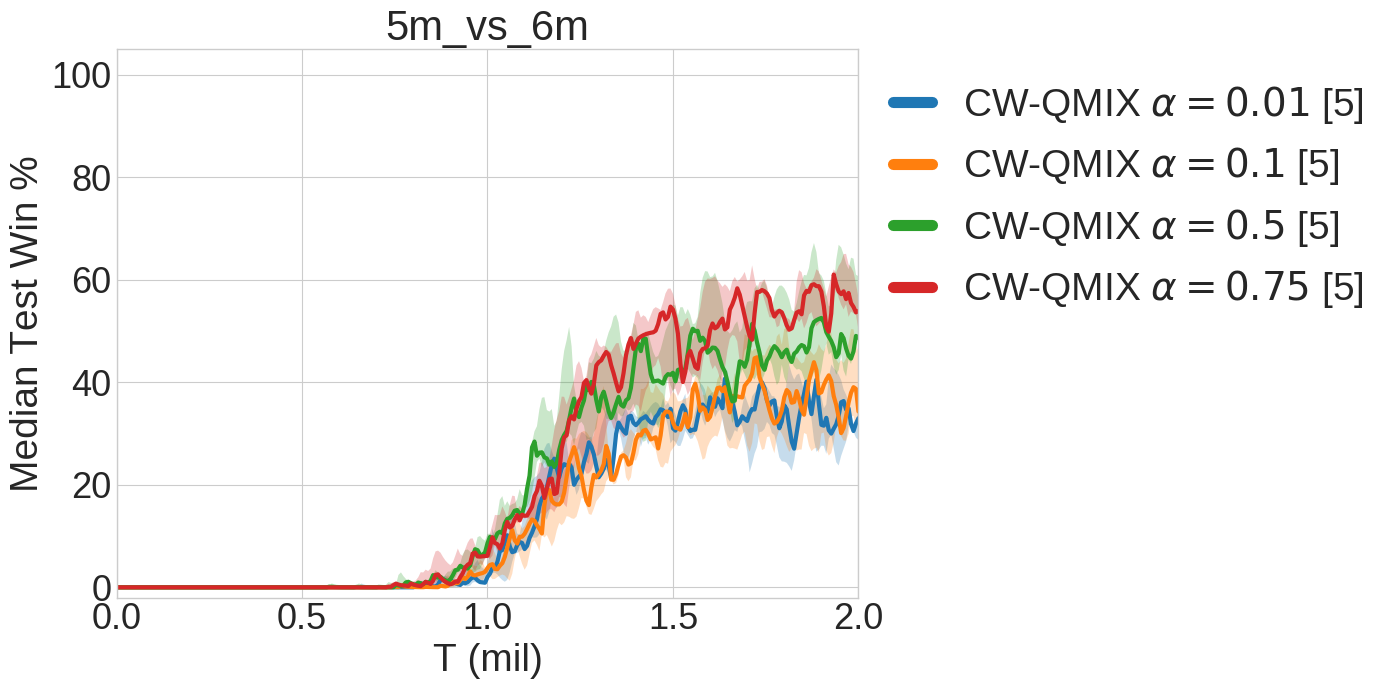}
    \includegraphics[width=0.49\columnwidth]{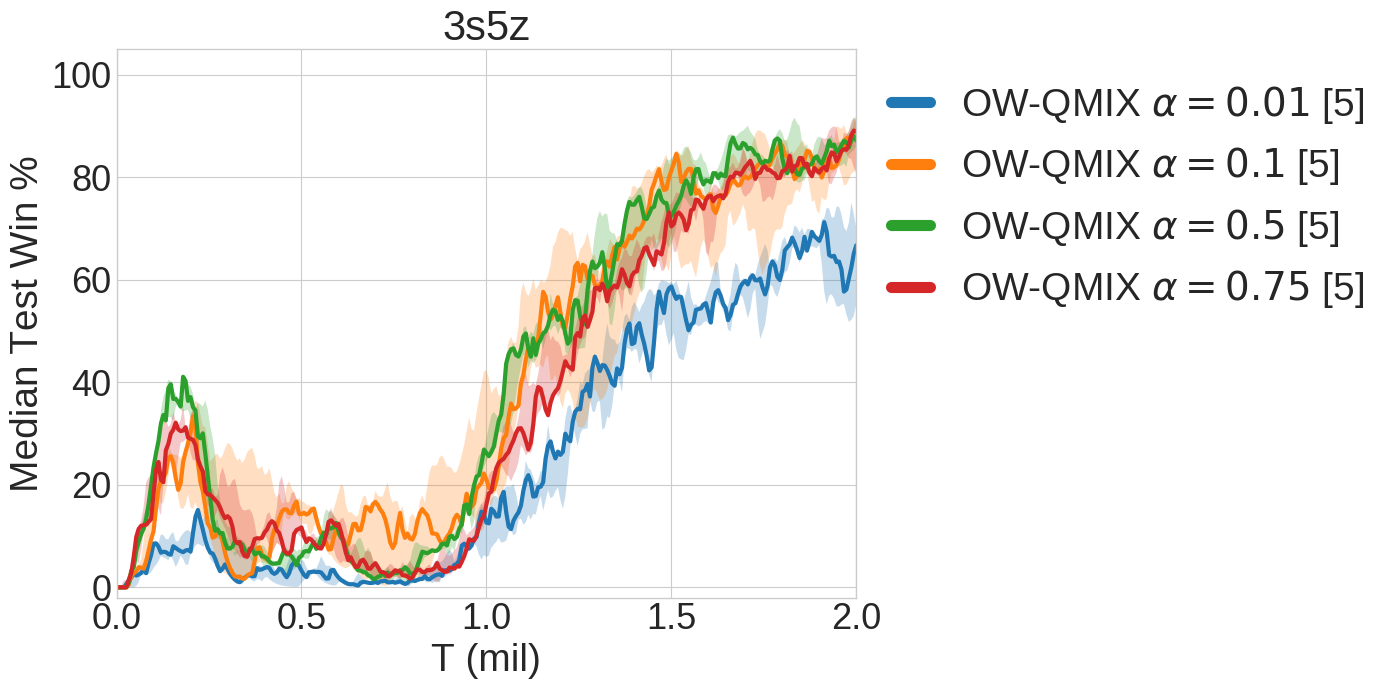}
    \includegraphics[width=0.49\columnwidth]{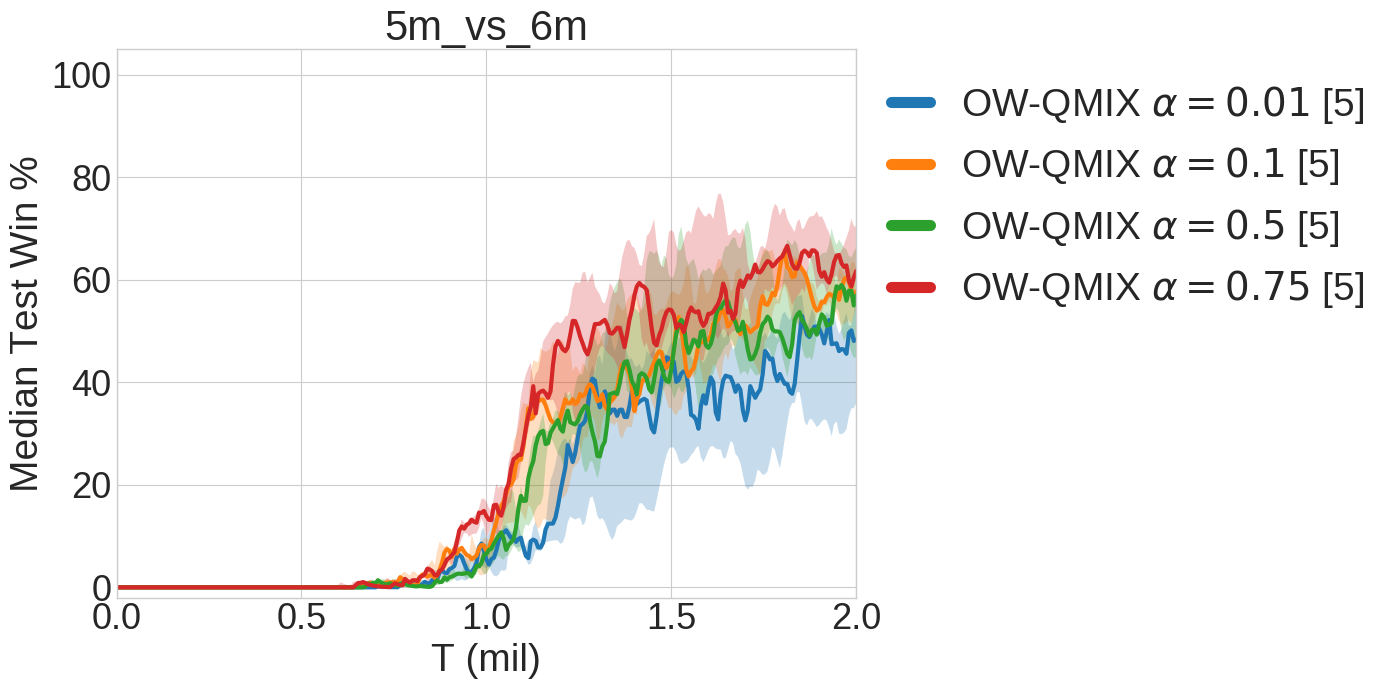}

    \caption{Median test win \% with an increased rate of exploration. \textbf{Above:} The effect of varying $\alpha$ for CW-QMIX. \textbf{Below:} The effect for OW-QMIX.}
    \label{graph:1mil_eps_w_exps}
\end{figure}

Figure \ref{graph:1mil_eps_w_exps} shows the effect of varying $\alpha$ in the weighting function for CW-QMIX and OW-QMIX.
We can see that if $\alpha$ is too low, performance degrades considerably.

\end{document}